\newtheorem{proposition}{Proposition}
\newtheorem{proof}{Proof}
\title{Double-Bounded Optimal Transport for Advanced Clustering and Classification}
\author{
    Liangliang Shi,
    Zhaoqi Shen,
    Junchi Yan\thanks{Correspondence author.}\\
}
\begin{document}

\maketitle

\begin{abstract}
Optimal transport (OT) is attracting increasing attention in machine learning. It aims to transport a source distribution to a target one at minimal cost. In its vanilla form, the source and target distributions are predetermined, which contracts to the real-world case involving undetermined targets. In this paper, we propose Doubly Bounded Optimal Transport (DB-OT), which assumes that the target distribution is restricted within two boundaries instead of a fixed one, thus giving more freedom for the transport to find solutions. Based on the entropic regularization of DB-OT, three scaling-based algorithms are devised for calculating the optimal solution. We also show that our DB-OT is helpful for barycenter-based clustering, which can avoid the excessive concentration of samples in a single cluster. Then we further develop DB-OT techniques for long-tailed classification which is an emerging and open problem. We first propose a connection between OT and classification, that is, in the classification task, training involves optimizing the Inverse OT to learn the representations, while testing involves optimizing the OT for predictions. With this OT perspective, we first apply DB-OT to improve the loss, and the Balanced Softmax is shown as a special case. Then we apply DB-OT for inference in the testing process. Even with vanilla Softmax trained features, our extensive experimental results show that our method can achieve good results with our improved inference scheme in the testing stage. 
\end{abstract}

\section{Introduction}



Optimal transport (OT)~\cite{cuturi2013sinkhorn} has been widely applied in machine learning~\cite{cui2019spherical,wang2013linear}. For instance, Wasserstein distance~\cite{arjovsky2017wasserstein,GulrajaniWGAN17} is applied with the dual form of OT to minimize the gap between the generated and real distributions via min-max adversarial optimization~\cite{li2022improving}. SwAV~\cite{caron2020unsupervised} employs the Sinkhorn algorithm for online clustering in self-supervised contrastive learning~\cite{chen2020simple,khosla2020supervised,wang2021understanding}.  OT-LDA~\cite{huynh2020otlda} devises a Wasserstein barycenter-based topic model, while in~\cite{xu2019gromov}, Gromov-Wasserstein distance is used for graph matching~\cite{WangICCV19,wang2021neural,hu2020open}. These works all assume the source and target distributions are fixed.

\begin{figure}[tb!]
    \centering
    \includegraphics[width=0.4\textwidth]{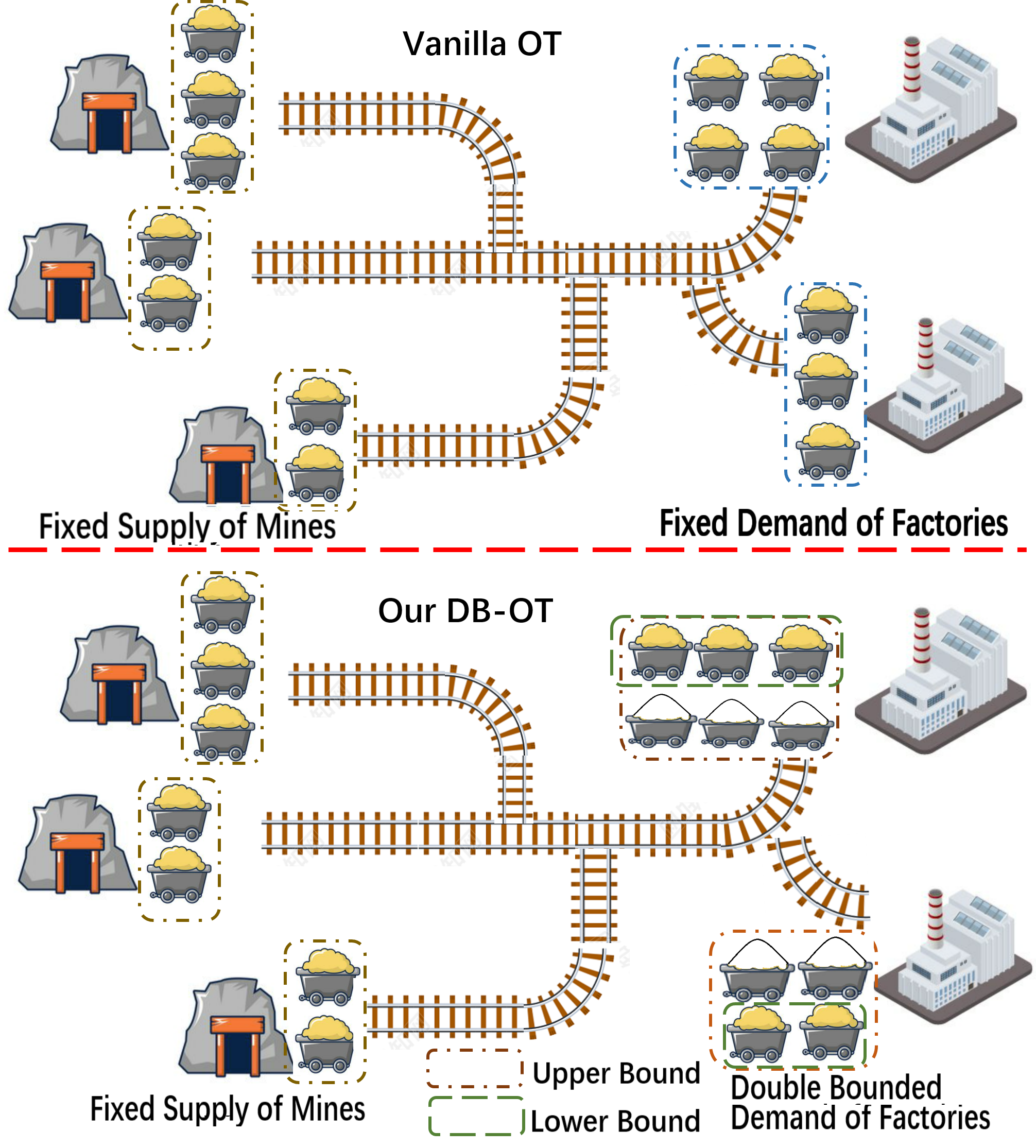}
    \caption{Illustration for the difference between vanilla OT and our DB-OT using the example of mines and factories as source and target, respectively. Vanilla OT assumes the equivalence between the supply and demand. In our DB-OT, we assume that the demand of the factory is not a fixed value, but rather a certain range by upper and lower bounds.
    } 
    \label{fig:difference}
\end{figure} 

However, in many real-world cases, the source or target distribution often varies over time and becomes uncertain. As shown in Fig.~\ref{fig:difference}, in vanilla OT, it assumes that the supply from mines and the demand from factories are fixed and and their total masses are equal to each other. While in our realistic setting we assume the demand from factories vary within a double-bounded range. When the demand exceeds the upper boundary, the factory reaches its production capacity limit. Conversely, when the demand falls below the lower boundary, the factory operates below its optimal capacity.\footnote{When both sides are not fixed, it degenerates to the lower bound. We discuss it in details in Sec.~\ref{sec:EDB-OT}.} 




Unbalanced OT~\cite{chizat2018scaling} is an OT variant to mitigate this inconsistency between source and target distributions. It relaxes the equality by only penalizing marginal deviation using divergences e.g. ${l}_1$, ${l}_2$, KL divergence, and etc. This method transforms the original hard constraints (equations) into penalty terms, allowing for a relaxation of the constraints and increasing the degree of freedom in transportation. However, the degree of relaxation can only be controlled via the penalty coefficient, which is not directly concerning the desired range of transportation results.





In this paper, we propose and term a new variant of OT called Double Bounded Optimal Transport (DB-OT), which allows the sum of coupling column to be in a range rather than a number, i.e., replacing the original equality constraint with two inequality constraints. Under this formulation, we further provide theoretical results with proof for DB-OT. Additionally, we propose three Sinkhorn-like algorithms based on entropic DB-OT. We also show that our results also inspire downstream applications.

Inspired by the OT-based topic model in OT-LDA~\cite{huynh2020otlda}, we propose a barycenter-based clustering method using DB-OT. This method allows for control of the sample quantity in each cluster, thereby avoiding the clustering of isolated or minority points within a cluster~\cite{wang2011improved}. Note that our barycenter-based clustering method is not limited to Euclidean space but can also be extended to other spaces, such as the Wasserstein space~\cite{agueh2011barycenters}. This extension allows our method to be generalized to handle more complex data, including text data.

Another application is using DB-OT for unbalanced image recognition. Specifically, to our best knowledge, we uncover the relation between classification and OT: training a classifier is equivalent to optimizing Inverse OT~\cite{shi2023understanding,stuart2020inverse,chiu2022discrete}, while inference in testing corresponds to optimizing OT problem. With this view, we first apply DB-OT to improve the loss for long-tailed data training. Different from traditional view using Softmax-based cross-entropy loss for learning, we increase the column-sum constraints for learning the representations and thus the Softmax generalizes to Sinkhorn-like iterations and the well-known balanced Softmax~\cite{ren2020balanced} can be a special case with only one iteration. Then we apply our DB-OT technique to inference in testing process and our approach is agnostic to the way how the classifier is trained. We successfully adopt our DB-OT inference technique to long-tailed, uniform, and reverse long-tailed datasets.
\textbf{The contributions of our work are:}

1) We propose DB-OT to handle the case that the targets of transportation fall in a double-bounded range, which is an important yet under-studied setting for practical machine learning, e.g. cluster size-controllable clustering and long-tailed  classification. We formulate DB-OT with the Kantorovich form by replacing the column equality constraints with double-bounded inequalities.

2) We solve DB-OT using entropic regularization and provide theoretical results, such as the static Schrödinger form, solution property, and dual form. To calculate the optimal solution, we propose a Bregman iterative algorithm based on the static Schrödinger form. Additionally, we develop matrix scaling-based methods using Lagrange methods, with two space-efficient variants derived from the primal and dual forms of the problem, respectively. Note that the motivation behind proposing these three algorithms is to align with the vanilla entropic OT for DB-OT.


3) We propose our barycenter-based method for clustering using DB-OT. It helps control the size of cluster and thereby avoiding the unwanted over-scattered clustering with isolated or very few samples in clusters. Besides, compared with the previous barycenter-based topic model, a reweighting modification is also proposed to avoid the bias for barycenter calculation especially in Euclidean space.

4) Finally, our DB-OT is employed in the long-tailed classification. From the OT perspective, we observe that training can be viewed as optimizing the Inverse Optimal Transport (IOT), while inference in the testing can be regarded as minimizing the OT itself. Building upon this insight, we apply DB-OT to enhance the classification loss during training and also utilize it in the testing process based on a trained model. 


\section{Preliminaries and Related Work}
\subsection{Basics of Optimal Transport}\label{sec:OTback}
As originated from~\cite{kantorovich1942transfer}, the Kantorovich's Optimal Transport is to solve a linear program, which is widely used for many classical problems such as matching~\cite{wang2013linear}. Specifically, given the cost matrix $\mathbf{C}$ and two histograms $(\mathbf{a},\mathbf{b})$, Kantorovich’s OT involves solving the coupling $\mathbf{P}$ (i.e., the joint probability matrix) by
\begin{equation}\label{eq:OT}
    \min_{\mathbf{P}\in U(\mathbf{a},\mathbf{b})} <\mathbf{C},\mathbf{P}>,  
\end{equation}
where  $U(\mathbf{a},\mathbf{b}) =\{\mathbf{P}\in R^+_{mn}|\mathbf{P}\mathbf{1}_n =\mathbf{a}, \mathbf{P}^\top \mathbf{1}_m =\mathbf{b}\}$. Relaxing with the entropic regularization~\cite{wilson1969use} is one of the simple yet efficient methods for solving OT, which can be formulated as~\cite{liero2018optimal}:
\begin{equation}\label{eq:OT_E}
    \min_{\mathbf{P}\in U(\mathbf{a},\mathbf{b})} <\mathbf{C},\mathbf{P}> - \epsilon H(\mathbf{P}),
\end{equation}
where $\epsilon>0$ is the coefficient for entropic regularization $H(\mathbf{P})$, and the regularization $H(\mathbf{P})$ can be specified as 
$
    H(\mathbf{P})=-<\mathbf{P},\log \mathbf{P} - \mathbf{1}_{m\times n}>.
$
The objective in Eq.~\ref{eq:OT_E} is $\epsilon$-{strongly} convex, and thus it has a unique solution, which can be solved by Sinkhorn algorithms as discussed in ~\cite{cuturi2013sinkhorn, benamou2015iterative}.

In this paper, beyond vanilla OT, we present our formulation for DB-OT and also propose the Sinkhorn algorithm.  


\subsection{Optimal Transport w/ Inequality Constraints}
The vanilla OT only considers the equality constraints in $U(\mathbf{a},\mathbf{b})$ yet inequality constraints are also handled in a few studies~\cite{caffarelli2010free, benamou2015iterative}. For instance, the optimal partial transport (OPT) problem~\cite{caffarelli2010free} assumes some mass variation or partial mass displacement should be handled for transportation. Thus OPT focuses on transporting only a fraction of mass $s\in[0,\min(||\mathbf{a}||_1,||\mathbf{b}||_1]$ as cheaply as possible. Then the constraints in this case is specified as:
\begin{equation}
     \mathcal{C}_{OPT}=\{\mathbf{P}\in R^+_{mn}|\mathbf{P}\mathbf{1}_n \leq\mathbf{a}, \mathbf{P}^\top \mathbf{1}_m \leq\mathbf{b},\mathbf{1}_m^\top\mathbf{P}\mathbf{1}_n =s\}.
\end{equation}

One can get the Partial-Wasserstein distance as $PW(\mathbf{a},\mathbf{b})=\min_{\mathbf{P}\in \mathcal{C}_{OPT}}<\mathbf{C},\mathbf{P}>$. This problem has been studied in~\cite{caffarelli2010free,figalli2010optimal,chapel2020partial}. In particular, \cite{chizat2018scaling,benamou2015iterative} propose the numerical solutions of OPT with entropic regularization. 
Different from the above constraints which only give upper bound in the constraints for mass transportation, in this paper, both upper and lower bounds are assumed for transportation, which allows the transportation results to fluctuate within a certain range rather than an exact number. 

\subsection{Unbalanced Optimal Transport}
OT in its vanilla form requires the two histograms $\mathbf{a}$ and $\mathbf{b}$ to have the same total mass. Many works have tried to go beyond this assumption and following the setting in Unbalanced OT~\cite{liero2018optimal, bai2023sliced}, the Kantorovich formulation in Eq.~\ref{eq:OT} is "relaxed" by only penalizing marginal deviation using some divergence $\mathcal{D}_\psi$: 
\begin{equation}\label{eq:UOT}
    \min_{\mathbf{P}\in R^+_P{mn}} <\mathbf{C},\mathbf{P}>+\tau_1\mathcal{D}_\psi(\mathbf{P}\mathbf{1}_n|\mathbf{a})+\tau_2\mathcal{D}_\psi(\mathbf{P}^\top\mathbf{1}_m|\mathbf{b})
\end{equation}
where $\tau_1$ and $\tau_2$ control how much mass variations are penalized as opposed to transportation of the mass and when $\tau_1=\tau_2\to+\infty$. This formulation equals to the original Kantorovich for $\sum_i \mathbf{a}_i=\sum_j \mathbf{b}_j$. When $\tau_1\to+\infty$ and $\tau_2<+\infty$, Unbalanced OT and DB-OT share a similar characteristic where the source distribution remains fixed while the target becomes unfixed. However, unlike DB-OT, it can only passively adjust the target distribution through $\tau_2$ lacking a direct mechanism to control the targets within a range.



\subsection{Unbalanced Image Recognition}

Unbalanced classification, particularly in the case of long-tailed recognition, is a well-known challenge that has garnered significant attention in vision and machine learning~\cite{zhang2023deep,he2009learning,lin2017focal}.

Various approaches~\cite{tan2020equalization,cui2019class,lin2017focal,kang2019decoupling,zhang2021deep} have been proposed to address this issue. One popular strategy involves rebalancing the class distribution in the training data~\cite{ren2020balanced,park2021influence}. Some methods employ techniques such as re-sampling~\cite{kang2019decoupling} or re-weighting~\cite{cui2019class} to ensure that the model pays more attention to minority classes during training. Data augmentation techniques, such as synthesizing additional samples for underrepresented classes, have also been utilized~\cite{kim2020m2m}. 
In this paper, we solve the unbalanced recognition with the OT perspective. We apply DB-OT both for learning the representations and inference in testing process, which is different to previous Bayesian view for classification.

\section{Double-Bounded Optimal Transport}
Vanilla OT usually involves two sets of equality constraints. Our motivation is to modify one type of constraints to the double bounded form which can be applicable to advanced clustering and classification.

\subsection{Formulation of DB-OT}\label{sec:EDB-OT}
We first assume the source measure $\alpha = \sum_{i=1}^m \mathbf{a}_i\delta_{x_{i}}$ and two bounds of target measure $\beta^d = \sum_{j=1}^n \mathbf{b}^d_i\delta_{y_{j}}, \beta^u = \sum_{j=1}^n\mathbf{b}^u_i\delta_{y_{j}}$ where $\mathbf{a}\geq \mathbf{0}$ and $\mathbf{b}^u\geq\mathbf{b}^d\geq\mathbf{0}$. Our purpose is to transport the source samples from $\alpha$ to the target measure which is defined between $\beta^d$ and $\beta^u$. By defining the cost matrix $\mathbf{C}$  between $\{x_{i}\}$ and $\{y_{i}\}$ (e.g. $\mathbf{C}_{ij}=d(x_i,y_j)$ where $d(\cdot,\cdot)$ is a distance), we can formulate the Double-Bounded Optimal Transport (DB-OT) as
\begin{equation}\label{eq:DB-OT}
\small{
    \min_{P\in \mathcal{C}(\mathbf{a},\mathbf{b}^u,\mathbf{b}^d)} <\mathbf{C},\mathbf{P}>=\sum_{i,j} \mathbf{C}_{ij}\mathbf{P}_{ij},
}
\end{equation}
where $\mathcal{C}(\mathbf{a},\mathbf{b}^u,\mathbf{b}^d)$ is the coupling set defined by constraints: 
\begin{equation}\label{eq:constraints}
    \mathcal{C}(\mathbf{a},\mathbf{b}^u,\mathbf{b}^d) = \{\mathbf{P}\in \mathbb{R}^+_{mn}|\mathbf{P}\mathbf{1}_n = \mathbf{a}, \mathbf{b}^d\leq\mathbf{P}^\top\mathbf{1}_m \leq \mathbf{b}^u\}.
\end{equation}
Similar to vanilla OT, the above model is still a linear program. From Eq.~\ref{eq:constraints}, we can see that we replace the original constraint $\mathbf{P}^\top\mathbf{1}_m =\mathbf{b}$ in Eq.~\ref{eq:OT} with $\mathbf{b}^d\leq\mathbf{P}^\top\mathbf{1}_m \leq \mathbf{b}^u$. That is, we slice $\mathbf{P}^\top\mathbf{1}_m$ between $\mathbf{b}^d$ and $\mathbf{b}^d$ to relax the constraint on $\mathbf{P}^\top\mathbf{1}_m$. It is obvious that when $\mathbf{b} = \mathbf{b}^d=\mathbf{b}^u$, DB-OT degenerates to vanilla OT in Eq.~\ref{eq:OT}. In this paper, we mainly focus on the entropic regularized formulation:
\begin{equation}\label{eq:EDB-OT}
    \min_{P\in \mathcal{C}(\mathbf{a},\mathbf{b}^u,\mathbf{b}^d)} <\mathbf{C},\mathbf{P}> -\epsilon H(\mathbf{P}),
\end{equation}
where $\epsilon$ is the regularization coefficient and $H(\mathbf{P})$ is the entropic regularization. It is obvious that the objective in Eq.~\ref{eq:EDB-OT} and the constraint set $\mathcal{C}(\mathbf{a},\mathbf{b}^u,\mathbf{b}^d)$ are both convex and thus Eq.~\ref{eq:EDB-OT} has a unique optimal solution. We will discuss its solution and the algorithm of entropic DB-OT in next subsection.

\noindent\textbf{Why not using double-bounded $\alpha$?} An intuitive question arises: why don't we assume that the $\alpha$ distribution is also constrained within upper and lower bounds? For instance, let's assume $\alpha^u = \sum_{i=1}^m \mathbf{a}^u_i\delta_{x_{i}}$ and $\alpha^d = \sum_{i=1}^m \mathbf{a}^d_i\delta_{x_{i}}$, and the coupling satisfies $\mathbf{a}^d\leq \mathbf{P}\mathbf{1}_n\leq \mathbf{a}^u$. In practice, the optimal transportation tends to transport mass vertically and towards the smaller lower bound. Without loss of generality, let's assume $\sum_i\mathbf{a}^d_i<\sum_j\mathbf{b}^d_j$. In this case, the optimal solution must satisfy $\sum_{ij}\mathbf{P}_{ij}=\sum_i\mathbf{a}^d_i$ (proof provided in oneline Appendix). As a result, setting an upper bound for the source distribution does not hold much significance.


\subsection{Sinkhorn Algorithm Variants for DB-OT}
In this subsection, we introduce several properties of DB-OT and present three corresponding Sinkhorn algorithms, namely the Bregman iterative algorithm, Sinkhorn\_Knopp algorithm, and Dual algorithm for DB-OT, respectively, which aims to be \textbf{in line with} the algorithms of \textbf{vanilla OT}. Although these algorithms are derived from different formulations or properties of DB-OT, they are fundamentally interconnected and converge to same solutions, which validates the effectiveness of the algorithms.



\noindent\textbf{$\blacksquare$ Variant-I: Bregman Iterations For DB-OT\\}
Similar to the vanilla entropic OT, our entropic DB-OT in Eq.~\ref{eq:EDB-OT} can {also be reformulated to the "static Schrödinger form" ~\cite{leonard2012schrodinger}}, which exactly learns a projection under KL divergence. 
\begin{proposition}[\textbf{Static Schrödinger Form}]\label{Prop:KLproof}
Redefine a general KL divergence in line with~\cite{benamou2015iterative}
$
    \widetilde{KL}(\mathbf{P}|\mathbf{K})=\sum_{ij}\mathbf{P}_{ij}\log\frac{\mathbf{P}_{ij}}{\mathbf{K}_{ij}}-\mathbf{P}_{ij}+\mathbf{K}_{ij}.
$
Let $\mathbf{K}_{ij}=e^{-\mathbf{C}_{ij}/\epsilon}$, the optimization in Eq.~\ref{eq:EDB-OT} is equivalent to the minimization:
\begin{equation}\label{eq:static}
\small{
        \mathbf{P}^*=\mathop{\arg\min}\limits_{\mathbf{P}\in \mathcal{C}(\mathbf{a},\mathbf{b}^u,\mathbf{b}^d)} \widetilde{KL}(\mathbf{P}|\mathbf{K}).
}
\end{equation}
\end{proposition}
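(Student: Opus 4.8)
The plan is to expand the entropic DB-OT objective from Eq.~\ref{eq:EDB-OT} and show it differs from $\epsilon\,\widetilde{KL}(\mathbf{P}|\mathbf{K})$ only by terms that are constant on the feasible set $\mathcal{C}(\mathbf{a},\mathbf{b}^u,\mathbf{b}^d)$, so that the two minimization problems have the same minimizer. First I would write $\langle\mathbf{C},\mathbf{P}\rangle - \epsilon H(\mathbf{P}) = \sum_{ij}\mathbf{C}_{ij}\mathbf{P}_{ij} + \epsilon\sum_{ij}\bigl(\mathbf{P}_{ij}\log\mathbf{P}_{ij} - \mathbf{P}_{ij}\bigr)$, using the definition $H(\mathbf{P}) = -\langle\mathbf{P},\log\mathbf{P}-\mathbf{1}_{m\times n}\rangle$. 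Then I would substitute $\mathbf{C}_{ij} = -\epsilon\log\mathbf{K}_{ij}$, which follows from $\mathbf{K}_{ij}=e^{-\mathbf{C}_{ij}/\epsilon}$, to get $\epsilon\sum_{ij}\bigl(\mathbf{P}_{ij}\log\mathbf{P}_{ij} - \mathbf{P}_{ij}\log\mathbf{K}_{ij} - \mathbf{P}_{ij}\bigr) = \epsilon\sum_{ij}\bigl(\mathbf{P}_{ij}\log\tfrac{\mathbf{P}_{ij}}{\mathbf{K}_{ij}} - \mathbf{P}_{ij}\bigr)$.

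Next I would compare this with $\epsilon\,\widetilde{KL}(\mathbf{P}|\mathbf{K}) = \epsilon\sum_{ij}\bigl(\mathbf{P}_{ij}\log\tfrac{\mathbf{P}_{ij}}{\mathbf{K}_{ij}} - \mathbf{P}_{ij} + \mathbf{K}_{ij}\bigr)$: the two expressions differ exactly by the additive term $\epsilon\sum_{ij}\mathbf{K}_{ij}$, which does not depend on $\mathbf{P}$ at all and is therefore irrelevant to the argmin. Hence $\arg\min_{\mathbf{P}\in\mathcal{C}} \bigl(\langle\mathbf{C},\mathbf{P}\rangle - \epsilon H(\mathbf{P})\bigr) = \arg\min_{\mathbf{P}\in\mathcal{C}} \widetilde{KL}(\mathbf{P}|\mathbf{K})$, since scaling by the positive constant $\epsilon$ and dropping a constant preserve the minimizer. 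I would also note the boundary/convention point that $0\log 0 = 0$ so the entropy and KL expressions are well defined on $\mathbb{R}^+_{mn}$, and invoke the strong convexity already observed after Eq.~\ref{eq:EDB-OT} to conclude the minimizer $\mathbf{P}^*$ is unique, matching the notation in Eq.~\ref{eq:static}.

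The argument is essentially a routine algebraic identity, so there is no deep obstacle; the only thing that needs a little care is bookkeeping the constant terms — specifically noticing that the $-\mathbf{P}_{ij}$ linear term coming from $H(\mathbf{P})$ is already present in $\widetilde{KL}$, so the sole leftover discrepancy is the genuinely constant $\sum_{ij}\mathbf{K}_{ij}$. It is worth stating explicitly that, unlike in the standard balanced case where one also sometimes absorbs marginal constraints, here no use is made of the feasible constraints to cancel terms: the equivalence holds because the discrepancy is constant on all of $\mathbb{R}^+_{mn}$, which makes the proof marginally cleaner than the usual Schrödinger-form derivation and explains why the same reasoning transfers unchanged from vanilla OT to the double-bounded constraint set $\mathcal{C}(\mathbf{a},\mathbf{b}^u,\mathbf{b}^d)$.
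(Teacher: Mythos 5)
Your proof is correct and follows essentially the same route as the paper's: both expand the definitions of $\widetilde{KL}$ and $H(\mathbf{P})$, substitute $\mathbf{C}_{ij}=-\epsilon\log\mathbf{K}_{ij}$, and observe that the two objectives agree up to the positive factor $\epsilon$ and an additive constant independent of $\mathbf{P}$. Your version is marginally more careful in that it explicitly isolates the leftover constant $\epsilon\sum_{ij}\mathbf{K}_{ij}$, which the paper silently drops in its first equality.
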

Following~\cite{benamou2015iterative}, a new variant of Sinkhorn algorithm can be devised with iterative Bregman projections for DB-OT. By splitting the constraint set as
$
    \mathcal{C}(\mathbf{a},\mathbf{b}^u,\mathbf{b}^d) = \mathcal{C}_1\cap\mathcal{C}_2\cap\mathcal{C}_3
$
where 
$
        \mathcal{C}_1 = \{\mathbf{P}\in \mathbb{R}^+_{nm}|\mathbf{P}\mathbf{1}_n =\mathbf{a}\}, 
        \mathcal{C}_2 = \{\mathbf{P}\in \mathbb{R}^+_{nm}| \mathbf{P}^\top \mathbf{1}_m \geq\mathbf{b}^d\}, 
        \mathcal{C}_3 = \{\mathbf{P}\in \mathbb{R}^+_{nm}| \mathbf{P}^\top \mathbf{1}_m \leq\mathbf{b}^u\}.
$
We can get the KL projection for $\mathbf{P}$ under the constraints as:
\begin{equation}\label{eq:bregman_2}
\small{
\begin{aligned}
    & Proj_{\mathcal{C}_1}^{KL}(\mathbf{P})=\text{diag}\left(\frac{\mathbf{a}}{\mathbf{P}\mathbf{1}_n}\right)\mathbf{P}\\
    & Proj_{\mathcal{C}_2}^{KL}(\mathbf{P})=\mathbf{P}\text{diag}\left(\max\left(\frac{\mathbf{b}^d}{\mathbf{P}^\top\mathbf{1}_m},\mathbf{1}_n\right)\right)\\
    & Proj_{\mathcal{C}_3}^{KL}(\mathbf{P})=\mathbf{P}\text{diag}\left(\min\left(\frac{\mathbf{b}^u}{\mathbf{P}^\top\mathbf{1}_m},\mathbf{1}_n\right)\right),\\
\end{aligned}
}
\end{equation}
where the above division operator 
between two vectors is to be understood entry-wise (See details in online Appendix). 
Finally, as introduced by \cite{benamou2015iterative}, assuming $\mathcal{C}_l=\mathcal{C}_{l+3}$ with positive integer $l$ as the index of Bregman iteration, the minimization in Eq.~\ref{eq:static} can be solved by with the iterative projection 
$
    \mathbf{P}^{(n)} = Proj_{\mathcal{C}_n}^{KL}\left( \mathbf{P}^{(n-1)}\right),
$
starting from $ \mathbf{P}^{(0)}=\mathbf{K}=e^{-\mathbf{C}/\epsilon}$. Then we can get the solution by $\mathbf{P}^*=\lim_{n\to \infty}Proj_{\mathcal{C}_n}^{KL}( \mathbf{P}^{(n-1)})$, which exactly do the iterations for Eq.~\ref{eq:bregman_2} and \cite{bregman1967relaxation} shows its convergence guarantee.

\noindent\textbf{$\blacksquare$ Variant-II: Sinkhorn-Knopp algorithm for DB-OT\\}
In addition to analyzing DB-OT with KL divergence and Bergman iterative projections, Lagrangian methods can also solve the problem. Following~\cite{cuturi2013sinkhorn}, we show that the form of solution is specified as follows.
\begin{proposition}[\textbf{Solution Property}]\label{prop:uKv}
The optimal solution of the optimization in Eq.~\ref{eq:EDB-OT} is unique and has the form 
\begin{equation}
    \mathbf{P}^* = \text{diag}(\mathbf{u})\mathbf{K}\text{diag}(\mathbf{q}\odot\mathbf{v})
\end{equation}
where $\mathbf{q}\odot\mathbf{v}=(\mathbf{q}_i\mathbf{v}_i)\in \mathbb{R}_n^+$ and the three scaling variables $(\mathbf{u},\mathbf{q},\mathbf{v})$ satisfy $\mathbf{u}\in R^+_m$, $\mathbf{0}_n\leq\mathbf{v}\leq\mathbf{1}_n$ and $ \mathbf{q}\geq \mathbf{1}_n$.
\end{proposition}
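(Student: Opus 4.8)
The plan is to derive the form from the Karush--Kuhn--Tucker (KKT) conditions, mirroring the classical argument for entropic OT in~\cite{cuturi2013sinkhorn}. First I would recall that the objective in Eq.~\ref{eq:EDB-OT} is $\epsilon$-strongly convex while the feasible set $\mathcal{C}(\mathbf{a},\mathbf{b}^u,\mathbf{b}^d)$ is convex, compact and (under the standing feasibility assumption $\sum_j\mathbf{b}^d_j\le\sum_i\mathbf{a}_i\le\sum_j\mathbf{b}^u_j$) nonempty, so a unique minimizer $\mathbf{P}^*$ exists, as already noted after Eq.~\ref{eq:EDB-OT}. The observation that makes the rest routine is that $\mathbf{P}^*$ is strictly positive entrywise: because $\partial(-\epsilon H)/\partial\mathbf{P}_{ij}=\epsilon\log\mathbf{P}_{ij}\to-\infty$ as $\mathbf{P}_{ij}\to 0^+$, any optimum must lie in the interior of the nonnegative orthant, so the constraints $\mathbf{P}\ge\mathbf{0}$ are inactive and may be dropped when forming the Lagrangian.

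Next I would introduce multipliers: a sign-free vector $\mathbf{f}\in\mathbb{R}^m$ for the equality $\mathbf{P}\mathbf{1}_n=\mathbf{a}$, and nonnegative vectors $\boldsymbol{\lambda}^d,\boldsymbol{\lambda}^u\in\mathbb{R}^n_+$ for the inequality constraints written as $\mathbf{b}^d-\mathbf{P}^\top\mathbf{1}_m\le\mathbf{0}$ and $\mathbf{P}^\top\mathbf{1}_m-\mathbf{b}^u\le\mathbf{0}$. Setting the $\mathbf{P}_{ij}$-partial of the Lagrangian to zero gives $\mathbf{C}_{ij}+\epsilon\log\mathbf{P}^*_{ij}+\mathbf{f}_i-\boldsymbol{\lambda}^d_j+\boldsymbol{\lambda}^u_j=0$, hence $\mathbf{P}^*_{ij}=e^{-\mathbf{f}_i/\epsilon}\,e^{-\mathbf{C}_{ij}/\epsilon}\,e^{\boldsymbol{\lambda}^d_j/\epsilon}\,e^{-\boldsymbol{\lambda}^u_j/\epsilon}$. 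Setting $\mathbf{u}_i=e^{-\mathbf{f}_i/\epsilon}$, $\mathbf{q}_j=e^{\boldsymbol{\lambda}^d_j/\epsilon}$ and $\mathbf{v}_j=e^{-\boldsymbol{\lambda}^u_j/\epsilon}$ yields $\mathbf{P}^*=\mathrm{diag}(\mathbf{u})\mathbf{K}\,\mathrm{diag}(\mathbf{q}\odot\mathbf{v})$ with $\mathbf{K}_{ij}=e^{-\mathbf{C}_{ij}/\epsilon}$. The stated sign conditions then follow from dual feasibility: the exponential gives $\mathbf{u}>\mathbf{0}$ unconditionally, $\boldsymbol{\lambda}^d\ge\mathbf{0}$ forces $\mathbf{q}\ge\mathbf{1}_n$, and $\boldsymbol{\lambda}^u\ge\mathbf{0}$ forces $\mathbf{0}_n\le\mathbf{v}\le\mathbf{1}_n$. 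Since the problem is convex with affine constraints, the KKT conditions are also sufficient, so the $\mathbf{P}^*$ built this way is the (already known to be unique) global optimum.

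The step requiring the most care is justifying that the KKT machinery applies with no duality gap and that the $\mathbf{P}\ge\mathbf{0}$ multipliers genuinely vanish. For the first point I would invoke the fact that all constraints are affine, so the linearity constraint qualification holds and strong duality is automatic once the problem is feasible; for the second, the barrier behaviour of $-H$ noted above. It is also worth making the feasibility condition on the total masses explicit at the outset, since otherwise $\mathcal{C}(\mathbf{a},\mathbf{b}^u,\mathbf{b}^d)=\emptyset$ and the claim is vacuous. Everything else --- differentiating the entropy term and collecting the exponential factors into the three scalings --- is the same bookkeeping as in the standard Sinkhorn derivation, and it automatically produces the three-vector scaling $(\mathbf{u},\mathbf{q},\mathbf{v})$ rather than the two-vector one of vanilla OT precisely because the single column equality has split into two one-sided inequalities.
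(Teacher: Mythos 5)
Your proposal is correct and follows essentially the same route as the paper's own proof: a Lagrangian/KKT argument introducing one sign-free multiplier for the row equality and two signed multipliers for the column inequalities, then exponentiating the first-order condition and identifying $\mathbf{u}$, $\mathbf{q}$, $\mathbf{v}$ with the (rescaled) exponentiated multipliers so that dual feasibility yields $\mathbf{q}\geq\mathbf{1}_n$ and $\mathbf{0}_n\leq\mathbf{v}\leq\mathbf{1}_n$. Your version is in fact somewhat more careful than the paper's, since you explicitly justify dropping the $\mathbf{P}\geq\mathbf{0}$ constraints via the entropy barrier, note the constraint qualification, and flag the feasibility condition on the total masses, none of which the paper spells out.
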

The proof is given in uploaded online appendix. Comparing to vanilla OT with equality constraints, three scaling variables need to be iterated. We then also derive the Sinkhorn\_Knopp algorithm of DB-OT given the iteration number $l$:
\begin{equation}\label{eq:sinkhorn2}
\small{
\begin{aligned}
    &\mathbf{u}^{(l+1)} =  \frac{\mathbf{a}}{\mathbf{K}(\mathbf{v}^{(l)}\odot\mathbf{q}^{(l)})},  \\
    & \mathbf{q}^{(l+1)} = 
    \max\left(\frac{\mathbf{b}^d}{(\mathbf{u}^{(l+1)}\mathbf{K})\odot\mathbf{v}^{(l)}},\mathbf{1}_n\right),\\
    &\mathbf{v}^{(l+1)} =  \min\left(\frac{\mathbf{b}_u}{(\mathbf{u}^{(l+1)}\mathbf{K})\odot\mathbf{q}^{(l+1)}},\mathbf{1}_n\right).
\end{aligned}
}
\end{equation}
By initializing the scaling variables $\mathbf{0}_n\leq\mathbf{v}^{(0)}\leq\mathbf{1}_n$ and $\mathbf{q}^{(0)}\geq \mathbf{1}_n$, we can iterate by Eq.~\ref{eq:sinkhorn2} to compute $(\mathbf{u}^{(1)},\mathbf{q}^{(1)},\mathbf{v}^{(1)}), (\mathbf{u}^{(2)},\mathbf{q}^{(2)},\mathbf{v}^{(2)}),\dots$ until convergence to $(\mathbf{u}^{*},\mathbf{q}^{*},\mathbf{v}^{*})$. Then the solution can be obtained as $\mathbf{P}^* = \text{diag}(\mathbf{u}^{*})\mathbf{K}\text{diag}(\mathbf{q}^{*}\odot\mathbf{v}^{*})$. 


\noindent\textbf{$\blacksquare$ Variant-III: The Dual Sinkhorn algorithm\\}
We also analyze entropic DB-OT with dual formulation, which is devised as follows.
\begin{proposition}[\textbf{Dual Formulation}]\label{Prop:dual}
From the  optimization in Eq.~\ref{eq:EDB-OT}, we can get its dual formulation by maximizing
\begin{equation}
     L = <\mathbf{f},\mathbf{a}>+<\mathbf{g},\mathbf{b}^d> +  <\mathbf{h},\mathbf{b}^u> - \epsilon B(\mathbf{f},\mathbf{g},\mathbf{h})
\end{equation}
where $\mathbf{f}\in \mathbb{R}^n$, $\mathbf{g}\leq  \mathbf{0}$ and $\mathbf{h}\geq\mathbf{0}$ are the corresponding dual variables and $B(\mathbf{f},\mathbf{g},\mathbf{h}) = <e^{\mathbf{f}/\epsilon},\mathbf{K}e^{(\mathbf{g}+\mathbf{h})/\epsilon}>$.
\end{proposition}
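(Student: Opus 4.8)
The plan is to derive the dual by the classical Lagrangian computation for entropic OT, the only new ingredient being that the single column equality of vanilla OT is split into the two inequalities $\mathbf{b}^d\le\mathbf{P}^\top\mathbf{1}_m\le\mathbf{b}^u$ of Eq.~\ref{eq:constraints}, which contribute a pair of sign-constrained multipliers instead of one free multiplier. First I would attach a free multiplier $\mathbf{f}$ to the row equality $\mathbf{P}\mathbf{1}_n=\mathbf{a}$, a multiplier $\mathbf{g}$ to the lower column bound, and a multiplier $\mathbf{h}$ to the upper column bound, writing
\[
\mathcal{L}(\mathbf{P},\mathbf{f},\mathbf{g},\mathbf{h})=\langle\mathbf{C},\mathbf{P}\rangle-\epsilon H(\mathbf{P})-\langle\mathbf{f},\mathbf{P}\mathbf{1}_n-\mathbf{a}\rangle-\langle\mathbf{g},\mathbf{P}^\top\mathbf{1}_m-\mathbf{b}^d\rangle-\langle\mathbf{h},\mathbf{P}^\top\mathbf{1}_m-\mathbf{b}^u\rangle ,
\]
where the directions of the two inequalities pin down the dual-feasibility constraints on $\mathbf{g}$ and $\mathbf{h}$ ($\mathbf{f}$ being unconstrained). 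Since the objective of Eq.~\ref{eq:EDB-OT} is $\epsilon$-strongly convex and all the constraints are affine, strong duality holds whenever the primal is feasible --- equivalently $\|\mathbf{b}^d\|_1\le\|\mathbf{a}\|_1\le\|\mathbf{b}^u\|_1$, which we may assume --- so Eq.~\ref{eq:EDB-OT} equals $\max_{\mathbf{f},\mathbf{g},\mathbf{h}}\min_{\mathbf{P}\ge\mathbf{0}}\mathcal{L}(\mathbf{P},\mathbf{f},\mathbf{g},\mathbf{h})$ over the dual-feasible set.

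Next I would evaluate the inner minimization in closed form. Because $-\epsilon H$ has a gradient that diverges on the boundary of the nonnegative orthant, the constraint $\mathbf{P}\ge\mathbf{0}$ is never active, and strict convexity together with coercivity of $\mathbf{P}\mapsto\langle\mathbf{C},\mathbf{P}\rangle-\epsilon H(\mathbf{P})+(\text{affine in }\mathbf{P})$ guarantees a unique finite minimizer at the stationary point. Setting $\partial\mathcal{L}/\partial\mathbf{P}_{ij}=0$ gives $\mathbf{C}_{ij}+\epsilon\log\mathbf{P}_{ij}-\mathbf{f}_i-\mathbf{g}_j-\mathbf{h}_j=0$, hence $\mathbf{P}^*_{ij}=e^{\mathbf{f}_i/\epsilon}\,\mathbf{K}_{ij}\,e^{(\mathbf{g}_j+\mathbf{h}_j)/\epsilon}$ with $\mathbf{K}_{ij}=e^{-\mathbf{C}_{ij}/\epsilon}$. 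This is precisely the three-scaling form of Proposition~\ref{prop:uKv} with $\mathbf{u}=e^{\mathbf{f}/\epsilon}$ and $\mathbf{q}\odot\mathbf{v}=e^{(\mathbf{g}+\mathbf{h})/\epsilon}$, and the scaling bounds $\mathbf{q}\ge\mathbf{1}_n$, $\mathbf{0}_n\le\mathbf{v}\le\mathbf{1}_n$ established there are exactly what fix the signs of $\mathbf{g}$ and $\mathbf{h}$.

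Finally I would substitute $\mathbf{P}^*$ back into $\mathcal{L}$. Using the stationarity identity $\epsilon\log\mathbf{P}^*_{ij}=\mathbf{f}_i+\mathbf{g}_j+\mathbf{h}_j-\mathbf{C}_{ij}$ inside $-\epsilon H(\mathbf{P}^*)=\epsilon\langle\mathbf{P}^*,\log\mathbf{P}^*-\mathbf{1}_{m\times n}\rangle$, the cost term $\langle\mathbf{C},\mathbf{P}^*\rangle$ and the three bilinear terms $\langle\mathbf{f},\mathbf{P}^*\mathbf{1}_n\rangle$, $\langle\mathbf{g},\mathbf{P}^{*\top}\mathbf{1}_m\rangle$, $\langle\mathbf{h},\mathbf{P}^{*\top}\mathbf{1}_m\rangle$ cancel against the matching pieces of the entropy and of the constraint residuals, while the $-\mathbf{1}_{m\times n}$ part of the entropy leaves $-\epsilon\sum_{ij}\mathbf{P}^*_{ij}$. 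Since $\sum_{ij}\mathbf{P}^*_{ij}=\langle e^{\mathbf{f}/\epsilon},\mathbf{K}e^{(\mathbf{g}+\mathbf{h})/\epsilon}\rangle=B(\mathbf{f},\mathbf{g},\mathbf{h})$, what remains is $\langle\mathbf{f},\mathbf{a}\rangle+\langle\mathbf{g},\mathbf{b}^d\rangle+\langle\mathbf{h},\mathbf{b}^u\rangle-\epsilon B(\mathbf{f},\mathbf{g},\mathbf{h})=L$, which is the claimed dual.

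I expect the only delicate part to be the sign bookkeeping: correctly matching each inequality to the direction of its multiplier (and hence reading off the dual-feasible region for $\mathbf{g}$ and $\mathbf{h}$), and verifying that all the bilinear cross-terms really do cancel so that exactly $-\epsilon B$ survives. One should also record explicitly that the inner infimum over $\mathbf{P}$ is finite and attained for every finite $(\mathbf{f},\mathbf{g},\mathbf{h})$ --- so that $L$ and the outer maximization are well posed --- which is immediate from the coercivity of the entropic term; everything else is a routine gradient computation mirroring the vanilla entropic OT case.
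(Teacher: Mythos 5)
Your proposal is correct and follows essentially the same route as the paper's own proof: form the Lagrangian with one free multiplier for the row-sum equality and two sign-constrained multipliers for the column bounds, solve the stationarity condition to obtain $\mathbf{P}^*_{ij}=e^{\mathbf{f}_i/\epsilon}\mathbf{K}_{ij}e^{(\mathbf{g}_j+\mathbf{h}_j)/\epsilon}$, and substitute back so that the cost and bilinear terms cancel against the entropy, leaving exactly $\langle\mathbf{f},\mathbf{a}\rangle+\langle\mathbf{g},\mathbf{b}^d\rangle+\langle\mathbf{h},\mathbf{b}^u\rangle-\epsilon B(\mathbf{f},\mathbf{g},\mathbf{h})$ (your version is in fact slightly more careful, since it records strong duality and the attainment of the inner infimum, which the paper omits). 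The only point worth flagging is that your sign convention $\mathbf{g}\ge\mathbf{0}$, $\mathbf{h}\le\mathbf{0}$ --- forced by the link to $\mathbf{q}\ge\mathbf{1}_n$ and $\mathbf{v}\le\mathbf{1}_n$ in Proposition~\ref{prop:uKv} and consistent with the $\max\{\cdot,\mathbf{0}\}$/$\min\{\cdot,\mathbf{0}\}$ updates in Eq.~\ref{eq:dual} and with the appendix restatement --- is the opposite of the signs printed in the main-text statement of the proposition, which appear to be a typo in the paper rather than an error on your part.
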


Then we can get the new algorithm with a block coordinate ascent on the dual:
\begin{equation}
\small{
    \begin{aligned}
        &\Delta_\mathbf{f} L=\mathbf{a}-e^{\mathbf{f}/\epsilon}\odot\mathbf{K}e^{(\mathbf{g}+\mathbf{h})/\epsilon}\\
        &\Delta_\mathbf{g} L=\mathbf{b}^d-e^{(\mathbf{g}+\mathbf{h})/\epsilon}\odot\mathbf{K}^\top e^{\mathbf{f}/\epsilon}\\
        &\Delta_\mathbf{h} L=\mathbf{b}^u-e^{(\mathbf{g}+\mathbf{h})/\epsilon}\odot\mathbf{K}^\top e^{\mathbf{f}/\epsilon}
    \end{aligned}.
}
\end{equation}
Starting from arbitrary $\mathbf{g}^{(0)}\leq 0$ and $\mathbf{h}^{(0)}\geq 0$, we get the dual algorithm by iterating the following with iteration number $l$:
\begin{equation}\label{eq:dual}
\small{
    \begin{aligned}
    &\mathbf{f}^{(l+1)} = \epsilon\log\mathbf{a}-\epsilon\log\left(\mathbf{K}e^{(\mathbf{g}^{(l)}+\mathbf{h}^{(l)})/\epsilon}\right)\\
    &\mathbf{g}^{(l+1)} = \max\left\{\epsilon\log\mathbf{b}^d-\epsilon\log(\mathbf{K}^\top e^{\mathbf{f}^{(l+1)}/\epsilon})-\mathbf{h}^{(l)},\mathbf{0}\right\}\\
    &\mathbf{h}^{(l+1)} = \min\left\{\epsilon\log\mathbf{b}^u-\epsilon\log(\mathbf{K}^\top e^{\mathbf{f}^{(l+1)}/\epsilon})-\mathbf{g}^{(l+1)},\mathbf{0}\right\}\\
    \end{aligned}.
}
\end{equation}
The iteration continues till convergence to $(\mathbf{f}^*,\mathbf{g}^*,\mathbf{h}^*)$ and the optimal solution equals $\mathbf{P}^*=\text{diag}(e^{\mathbf{f}/\epsilon})\mathbf{K}\text{diag}(e^{(\mathbf{g}^*+\mathbf{h}^*)/\epsilon})$.

Exactly, Eq.~\ref{eq:dual} are mathematically equivalent to the Sinkhorn iterations in Eq.~\ref{eq:sinkhorn2} when considering the primal-dual relations. Indeed, we recover that at any iteration:
$
        (\mathbf{f}^{(l)},\mathbf{g}^{(l)},\mathbf{h}^{(l)}) = \epsilon\left(\log(\mathbf{u}^{(l)},\log\mathbf{q}^{(l)},\log\mathbf{v}^{(l)}\right).
$
In this paper, we utilize the Sinkhorn\_Knopp algorithm for clustering tasks and the Bregman iterations for long-tail classification.




\begin{figure*}[tb]
    \centering
    \includegraphics[width=0.82\textwidth]{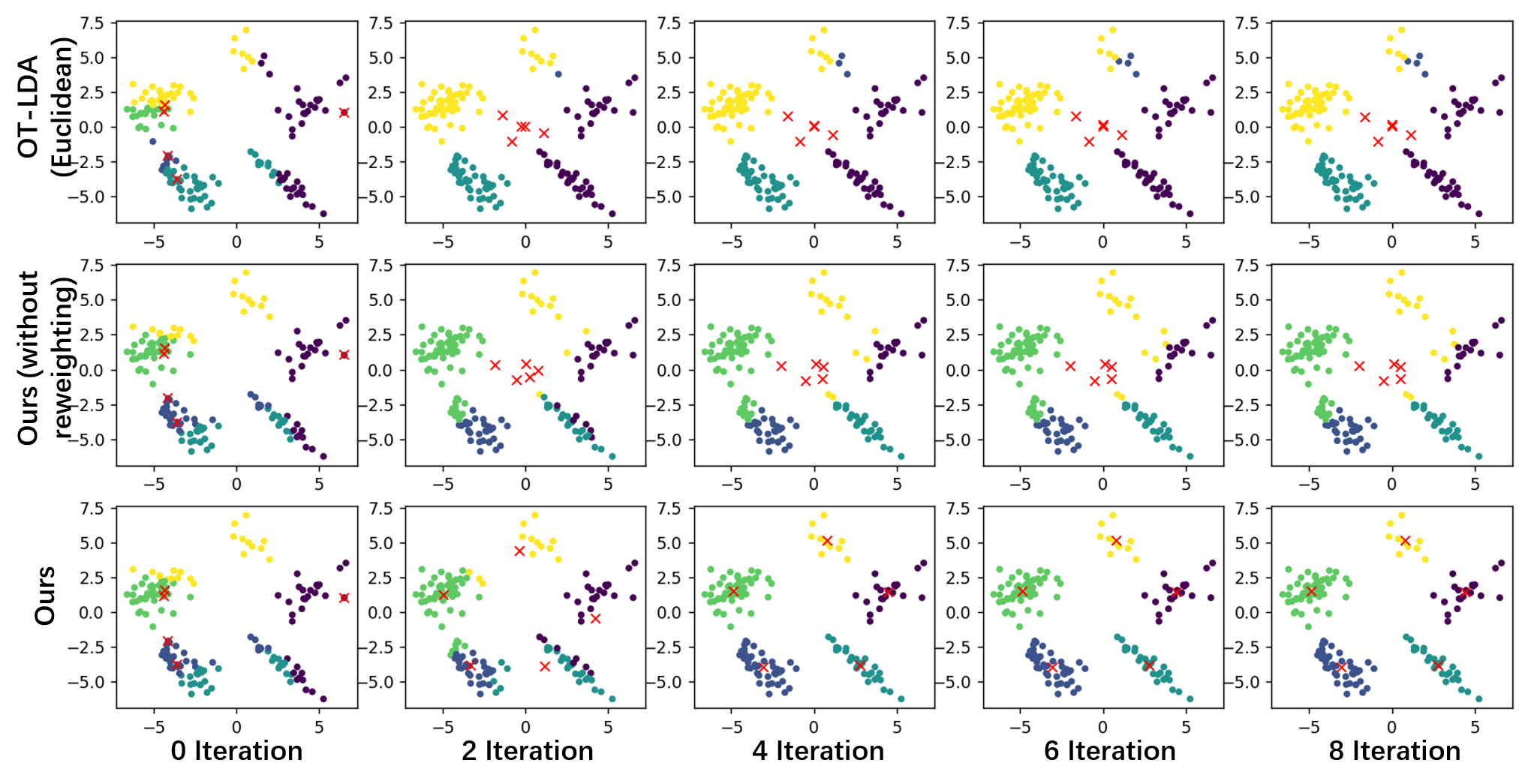}
    \caption{The results of Barycenter-based clustering, which  is performed on data points sampled from 5 Gaussian distributions. The colors represent the cluster assignments of the samples, and the red crosses denote the centroids/barycenters. Note that in both OT-LDA and our method without reweighting the barycenter weights, the calculated centroids exhibit a noticeable bias. } 
    \label{fig:clustering}
\end{figure*}

\section{DB-OT for Clustering and Classification}

\subsection{Barycenter-based Clustering}

Inspired by the OT-based topic model~\cite{huynh2020otlda}, we show the application of DB-OT to an advanced clustering task whereby the number of samples in each cluster is softly required to fall into a certan range. Specifically given samples $\{\alpha_s\}$, it learns the coupling $\mathbf{P}$ to match the samples to centroids,  and the clustered centroids $\{\beta_t\}$ are viewed as barycenters which needs to optimize. The optimization is specified as:
\begin{equation}\label{eq:EDB-OT_cluster}
\small{
    \min_{\mathbf{P}\in \mathcal{C}(\mathbf{a},\mathbf{b}^d,\mathbf{b}^u),\{\beta_t\}} \sum_{st} \mathbf{P}_{st}D(\alpha_s,\beta_t)-\epsilon H(\mathbf{P}),
}
\end{equation}
where $\mathbf{a}=\mathbf{1}$ (here we break the probability measure assumption as done in \cite{saad2021graph}) and $\mathbf{b}^d,\mathbf{b}^u$ are used to denote the given bounds of clustered centroids i.e. the minimum/maximum "quantity" of samples in each cluster. $D(\alpha_s,\beta_t)$ is the distance between $\alpha_s$ and $\beta_t$ (e.g. Euclidean or Wasserstein distance), and $ H(\mathbf{P})$ is the entropic regularization which aims to relax the solution of coupling. This problem can be solved by alternating optimization. For each iteration, we first fix the centroid distributions $\{\beta_t\}$ to compute the distance $D(\alpha_s,\beta_t)$ between sample $i$ and centroid $j$ and then transportation matrix $\mathbf{P}$ can be learned from the source samples to clustered centroids. 

Note in our method, we have $\mathbf{b}^d\leq\mathbf{P}^\top \mathbf{1}_n\leq \mathbf{b}^u$, which means the coupling between samples and cendroids are controlled within a desired range, avoiding clusters with isolated or very few samples, as well as dominating clusters.

Specifically, for solving the optimization to clustering in Eq.~\ref{eq:EDB-OT_cluster}, we first initialize $\{\beta_t\}$ with initialization of Kmeans++ or set them as randomly from $\{\alpha_s\}$. Then we can learn by iterating the following two steps:

\noindent\textbf{1) Fixing $\{\beta_t\}$ to compute $\mathbf{P}$.}  If the barycenter $\{\beta_t\}$ is known, then one can calculate the matrix $\mathbf{D}$ where $\mathbf{D}_{st} = D(\alpha_s,\beta_t)$. Viewing the matrix $\mathbf{D}$ as the cost function, the optimization of $\mathbf{P}$ in Eq.~\ref{eq:EDB-OT_cluster} is equal to DB-OT optimization in Eq.~\ref{eq:EDB-OT} and the algorithm proposed in Sec.~\ref{sec:EDB-OT} can be used to get the result of $\mathbf{P}$. Here we adopt the Sinkhorn\_Knopp algorithm as given in Eq.~\ref{eq:sinkhorn2}.

\noindent\textbf{2) Fixing $\mathbf{P}$ to compute $\{\beta_t\}$}. Assuming the transportation $\mathbf{P}$ is known,  our goal is now to update $\{\beta_t\}$ as the solution of the following optimization problem
\begin{equation}\label{eq:reweightBC}
    \beta_t = \arg\min_\beta\sum_s\frac{\mathbf{P}_{st}\cdot R_{st}}{\sum_{s'} \mathbf{P}_{s't}\cdot R_{s't}}D(\alpha_s,\beta)
\end{equation}
where $R_{st}=1$ if $t=\arg\max_{t'}  \mathbf{P}_{st'}$  otherwise $R_{st}=0$ given the sample $\alpha_s$. Here $R_{st}$ are used for {re-weighting} the weights of barycenters, which aims to debias the influence for samples from different clusters. Fig.~\ref{fig:clustering} is the clustering results and we can find a large bias for cendroids without re-weighting the barycenter weights. 

\noindent\textbf{Under Different Metric Space for $D(\cdot,\cdot)$.} 
Clustering can be performed in different metric spaces by setting the distance function $D(\cdot,\cdot)$ to adapt to various metric settings or datasets. For example, when $D(\cdot,\cdot)$ represents the Euclidean distance and ${\alpha_s}$ corresponds to feature points in Euclidean space, we aim to cluster ${\alpha_s}$ in the Euclidean space. The centroids ${\beta_t}$ can be directly computed as follows:
\begin{equation}
    \beta_t = \sum_s\frac{\mathbf{P}_{st}\cdot R_{st}}{\sum_{s'} \mathbf{P}_{s't}\cdot R_{s't}}\alpha_s.
\end{equation}
Exactly if we set $R_{st}=1$ for all $s$ and $t$ (i.e. without reweighting as shown in the second row of Fig.~\ref{fig:clustering}), the barycenters can be easily influenced by samples from other clusters. This can lead to a bias in the barycenter calculation, particularly when the value of $\epsilon$ is not sufficiently small.
\begin{figure*}[tb!]
    \centering
    \includegraphics[width=0.85\textwidth]{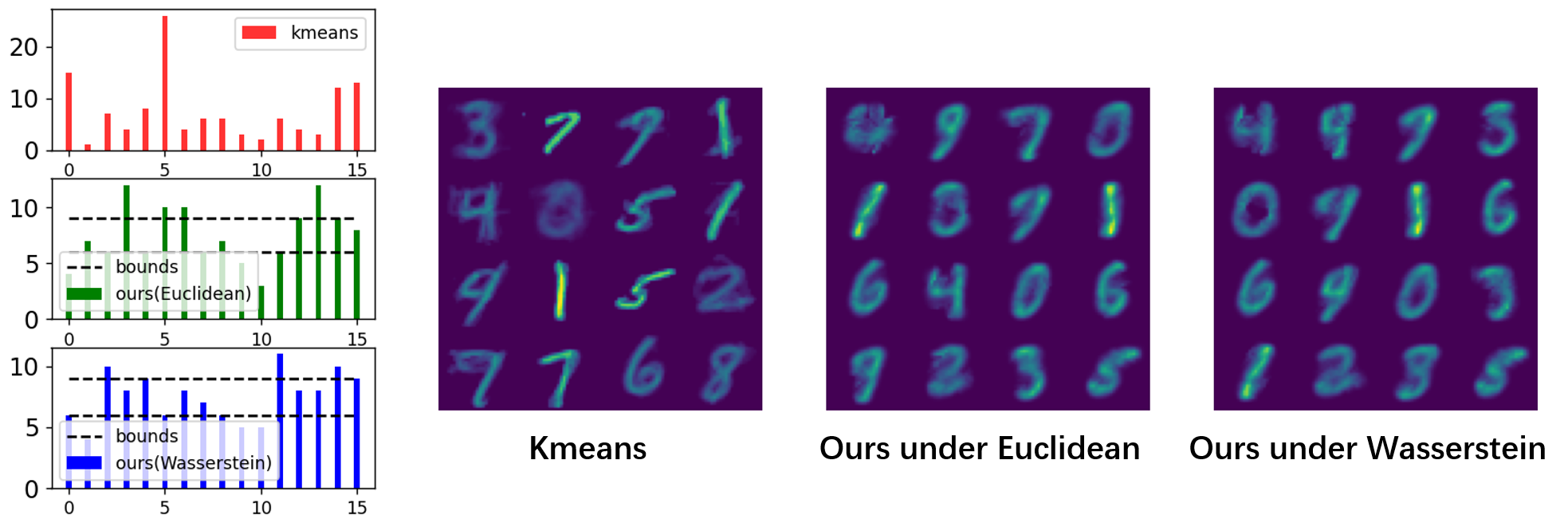}
    \caption{Clustering distribution and the pixel-wise mean centroids (forming into numbers) on MNIST. Our results are well controlled within the bounds, and kmeans cannot satisfy this property resulting in more scattered clusters of varying size.} 
    \label{fig:Barycenter_MNIST}
\end{figure*} 

In addition to Euclidean space, clustering can also be performed in Wasserstein space. In this case, we assume that ${\alpha_s}$ is no longer a set of feature points but a collection of probability measures, i.e., ${\alpha_s} = \sum_i \mathbf{a}^s_i\delta_{x_i}$. In this setting, the computation of the barycenter in Eq.~\ref{eq:reweightBC} does not have an analytical solution but requires iterative methods, as proposed in \cite{benamou2015iterative} for barycenter calculation.

\noindent\textbf{Advantages for our Barycenter-based Clustering.} Our barycenter-based clustering is essentially a unified clustering framework. Compare to the previous clustering works (e.g. Kmeans~\cite{ahmed2020k} or OT-LDA~\cite{huynh2020otlda}), the advantages  can be summarized as follows: 
\textbf{1) Our} clustering methods apply DB-OT to calculate the matching between samples and centroids, providing better controllability. As shown in Fig.~\ref{fig:Barycenter_MNIST}, by setting $\mathbf{b}^d$ and $\mathbf{b}^u$, we can constrain the number of samples in each cluster to a certain extent, thereby avoiding clusters with isolated or very few samples, as well as clusters that dominate the majority of the data;   \textbf{2)  Our} clustering methods can be applied not only to clustering feature points but also to clustering probability measures. For example, as shown in Fig.~\ref{fig:Barycenter_MNIST}, we can perform clustering on the MNIST dataset in the Wasserstein space. In this case, we no longer consider images as point vectors but treat the pixel values as histograms, and the pixel locations are used to compute the cost matrix. Moreover, this clustering method can also be used for text clustering (e.g., topic modeling), making it more adaptable to different types of datasets;  
\textbf{3)  Compared} to previous work in OT-LDA~\cite{huynh2020otlda}, we introduce a reweighting of the barycenter weights, which helps to mitigate the biases in centroid computation, as shown in Fig.~\ref{fig:clustering}.





    

\subsection{DB-OT for Long-tailed Classification}

\noindent\textbf{Motivation of using DB-OT in classification.} Here we apply DB-OT for Long-tailed unbalanced classification. Following the works~\cite{shi2023understanding,shi2023relative},  the Inverse Optimal Transport~\cite{li2019learning,stuart2020inverse,chiu2022discrete} can be set as a bi-level optimization for classification as
\begin{equation}\label{eq:iot}
        \min_\theta KL(\mathbf{\tilde{P}}|\mathbf{P}^\theta)
        \text{\quad s.t. \quad}
        \mathbf{P}^\theta = \arg \min_{\mathbf{P}\in U} <\mathbf{C}^\theta,\mathbf{P}> - \epsilon H(\mathbf{P}).
 \end{equation}
Here $U$ is the set of couplings and when the set $U=U(\mathbf{a}) = \{\mathbf{P}\in \mathbb{R}^+_{m\times n}|\mathbf{P}\mathrm{1}_m=\mathbf{a}\}$, this optimization is equivalent to:
\begin{equation}\label{eq:softce}
    \min_{\theta} \mathcal{L} = -\sum_{ij}\mathbf{\tilde{P}}_{ij} \log \left(\frac{\exp(-\mathbf{C}^\theta_{ij}/\epsilon)}{\sum_{k=1}^m\exp(-\mathbf{C}^\theta_{ik}/\epsilon))}\right)+constant,
\end{equation}
where $\mathbf{\tilde{P}}_{ij}$ is the ground truth matrix defined by the supervised one-hot labels and $\mathbf{a}=\mathbf{1}/m$. The proof process is similarly discussed in \cite{shi2023understanding}. Specifically, for the ground truth, $\mathbf{\tilde{P}}_{ij}=1$ if $j$ is the label index of sample $i$ otherwise $\mathbf{\tilde{P}}_{ij}=0$. The loss in Eq.~\ref{eq:softce} is exactly equal to the Softmax-CrossEntropy loss if we set $\mathbf{C}^\theta_{ij}=c-l_{ij}$ where $c$ is a large enough constant and $l_{ij}$ is the logits of sample $i$ on class $j$. The equivalence between Inverse OT under $U(\mathbf{a})$ and Softmax-CrossEntropy loss motivate us that we can adopt different constraints (e.g. $\mathcal{C}(\mathbf{a},\mathbf{b}^u,\mathbf{b}^d)$) instead of $U(\mathbf{a})$. Then in the following, we show our main idea that classification learning can be viewed as optimizing Inverse OT and Classification inference is learning OT.

\noindent\textbf{Training via Inverse DB-OT.}\label{sec:TrainDB-OT}
As discussed above, we apply DB-OT by setting $U=\mathcal{C}(\mathbf{a},\mathbf{b}^u,\mathbf{b}^d)$ in Eq.~\ref{eq:iot} for long-tailed classification, which assumes that the labels of training data have a known long-tailed distribution $\mathbf{r}$. We set $\mathbf{a}=1/m$ and:
\begin{equation}\label{eq:settingbubd}
\small{
        \mathbf{b}^u = (1+\delta)\mathbf{r}\quad \text{ and } \quad   \\
        \mathbf{b}^d = (1-\delta)\mathbf{r}, 
}
\end{equation}
where $\delta\in (0,1)$ is the bound rate for $\mathbf{r}$. We adopt the Bregman method for calculating $\mathbf{P}^\theta$ with $K$ iteration. We find when $K=1$ and $\delta=0$, the Balanced Softmax~\cite{ren2020balanced} is a special case of our loss, which validates the effectiveness of our theory.
\begin{table*}[tb!]
    \centering
   
    \resizebox{0.8\textwidth}{!}{
    \begin{tabular}{l||ccc||cccc||cccc}
      \toprule[1.0pt]
       \multirow{2}{*}{\textbf{\large Method}} & \multicolumn{3}{c||}{\textbf{CIFAR10-LT}} & \multicolumn{4}{c||}{\textbf{CIFAR100-LT}}  & \multicolumn{4}{c}{\textbf{ImageNet-LT}}    \\ 
       \cline{2-12}
        & Many & Few & All & Many & Medium & Few & All & Many & Medium & Few & All    \\ 
      \midrule[0.6pt]

       Vanilla Softmax & 77.4 & 68.9 &  74.9  & 75.8 & 48.2 & 11.0 & 42.0 & 57.3 & 26.2 & 3.1 & 35.0     \\
       
       LDAM ~\cite{cao2019learning} & 80.5 & 65.2 & 75.9 & 75.7 & 50.6 & 11.5 & 42.9 & \textbf{57.3} & 27.6 & 4.4 & 35.9\\
       Balanced Softmax & 82.2 & 71.6 & 79.0 & 70.3 & 50.4 & 26.5 & 47.0 & 52.5 & 38.6 & 17.8 & 41.1 \\
       Focal Loss~\cite{lin2017focal} & 79.6 & 58.4 & 73.3 & \textbf{76.1} & 46.9 & 11.1 & 41.7 & 57.3 & 27.6 & 4.4 & 35.9\\
       LogitAdjust~\cite{menon2020long} & 80.0 & 35.3 & 66.6 & 75.7 & 39.2 & 4.1 & 36.5 & 54.2 & 14.0 & 0.4 & 27.6 \\
       CB-CE~\cite{cui2019class} & 76.6 & 70.7 & 74.8 & 53.2 & 48.8 & 13.3 & 36.3 & 35.3 & 32.1 & 21.2 & 31.9\\
       CB-FC~\cite{cui2019class} & 76.6 & 70.7 & 74.8 & 53.2 & 48.8 & 13.3 & 36.3 & 35.3 & 32.1 & \textbf{21.2} & 31.9\\
       
      DB-OT Loss (ours) & \textbf{82.4} & \textbf{80.8} & \textbf{81.9} & 70.4 & \textbf{53.0} & \textbf{26.6} & \textbf{47.9} & 53.5 & \textbf{39.0} & 17.4 & \textbf{41.6} \\
      \bottomrule[1.0pt]
    \end{tabular}
    } 
     \caption{{Top-1 accuracy (\%)
   for long-tailed image classification with 200 imbalanced factor on three popular LT datasets.}} 
    \label{tab:Image_iNa_bench}
\end{table*}

\begin{table*}[tb!]
    \centering
    \resizebox{0.99\textwidth}{!}{
    \begin{tabular}{l||ccc||ccc||ccc||ccc}
      \toprule[1.0pt]
       \multirow{2}{*}{\textbf{\large Testing Inference}} & \multicolumn{3}{c||}{\textbf{Vanilla Softmax Loss}} & \multicolumn{3}{c||}{\textbf{Balanced Softmax}}  & \multicolumn{3}{c||}{\textbf{Focal Loss}}  &
       \multicolumn{3}{c}{\textbf{Logit Adjustment Loss}} \\ 
       \cline{2-13}
        & LT & Uniform & Reverse LT & LT & Uniform & Reverse LT  & LT & Uniform & Reverse LT  & LT & Uniform & Reverse LT   \\ 
      \midrule[0.6pt]

       Vanilla Softmax & 71.6 & 42.0 & 18.7 & 66.5 & 47.0 & 20.1 & 68.5 & 41.7 & 18.4 & 69.3 & 36.4 & 4.8 \\
       classifier normalize\cite{kang2019decoupling}& 70.0 & 45.2 & 16.8 & 54.9 & 42.4 & 28.6 & 67.0 & 41.3 & 13.3 & 70.5 & 40.4 & 7.9 \\
       Class-aware bias\cite{menon2020long} & 68.0 & 41.3 & 14.5 & 41.5 & 45.0 & 18.6 & 54.6 & 40.0 & 14.1 & 70.8 & 36.4 & 7.7 \\
      DB-OT inference (ours) & \textbf{71.8} & \textbf{48.5} & \textbf{36.6} & \textbf{71.4} & \textbf{47.8} & \textbf{34.6} & \textbf{68.7} & \textbf{44.1} & \textbf{34.4} & \textbf{71.3} & \textbf{44.8} & \textbf{32.3} \\
      \bottomrule[1.0pt]
    \end{tabular}
    } 
    
    \caption{Top-1 accuracy (\%) of
    CIFAR-100 for the comparison of different testing inference methods given four trained models.}\label{tab:Inference} 
\end{table*}

\noindent\textbf{Testing-time Inference with DB-OT.} 
Here we focus on using DB-OT for inference in testing process. Note our inference method is orthogonal to the training method proposed in Sec.~\ref{sec:TrainDB-OT}, which means our inference can be used for different classifiers. For the testing inference, we treat all the training as a process of feature learning and then with the feature learned, we can define the cost matrix given each batch data and match the features and labels with OT. For the long-tailed classification, the inference is as follows:
\begin{equation}
    \min_{\mathbf{P}\in \mathcal{C}(\mathbf{a},\mathbf{b}^u,\mathbf{b}^d)} <\mathbf{C},\mathbf{P}> -\epsilon H(\mathbf{P}).
\end{equation}
Here $\mathbf{C}_{ij}=c-l_{ij}$ with large enough value $c$  to guarantee the positiveness of the the cost. Meanwhile, we set $\mathbf{b}^u$ and $\mathbf{b}^d$ as Eq.~\ref{eq:settingbubd}, which enables predictions to fluctuate both upward and downward, taking into account the randomness of batch data sampling. During the testing process,  $\mathbf{r}$ can be configured as a known long-tailed distribution, uniform distribution, reverse long-tailed distribution, or any other appropriate distribution, which is a main advantage of our method. 


\section{Experiments}
We conduct experiments on both clustering and classification. To showcase the advantage of DB-OT on fine-grained controlling the behavior of clustering under bounded sizes of clusters, and classification in long-tail cases. 

\subsection{Experiments on Size-controlled Clustering}
We first study our clustering setting which each cluster's size is bounded by a certain range, using our DB-OT function with Barycenter. The experimental results on Gaussian mixture synthetic datasets with 2D 150 points are shown in Fig.~\ref{fig:clustering}. We can find that the OT-LDA and our method without reweighting exhibit a noticeable bias and re-weighting the barycenter weights can overcome this issue with Eq.~\ref{eq:reweightBC}. We also evaluate the top-1 accuracy within the cluster and the results of OT-LDA, ours (without and with reweighting) are 82.67\%, 88.00\%, and 100.00\% respectively, which shows the superiority of our method. We also do the clustering experiments on MNIST with three different methods (i.e. K-means, DB-OT under Euclidean space and DB-OT under Wasserstein space). We select parts of MNIST data (12 images in each class) and set 16 clusters for the experiments. The results are shown in Fig.~\ref{fig:Barycenter_MNIST}, and we can find that our method can be more controllable for the number of samples in each cluster. Note the bounds are to control the column probability sum of the coupling and thus it does not strictly meet the bound range for sample quantity. 

\subsection{Experiments on Long-tailed Classification}
We evaluate our two methods -- both for using DB-OT to improve the loss and adopting the DB-OT for classification inference in the testing process. These two methods are parallel to each other as the former aims to learn a better representation and the latter is to get a more accurate prediction based on a known testing label ratio. We do the experiments on CIFAR10-LT, CIFAR100-LT~\cite{krizhevsky2009learning}, ImageNet-LT~\cite{liu2019large} for image classification.  
For a fair comparison, all methods share the same network backbone and hyperparameters, including the learning rate. More detailed information about the experimental settings is given in the online Appendix.

\noindent\textbf{Improvements of DB-OT based loss.} To evaluate the performance of our DB-OT based loss, following~\cite{ren2020balanced}, we start with a vanilla Softmax pretrained model and train with our DB-OT based loss to improve the representations. We use the corresponding balanced testing dataset for evaluation, where its labels are uniformly distributed. We report top-1 accuracy as the evaluation metric. Specifically, for CIFAR10-LT, we report accuracy on two sets of classes in detail: Many-shot (more than 100 images) and Few-shot (less than 100 images). For CIFAR00-LT and ImageNet-LT, we report accuracy on three sets: Many-shot (more than 100 images), Medium-shot (20 $\sim$ 100 images), and Few-shot (less than 20 images). The experiments for unbalanced image classification are all conducted with an imbalanced factor of less than 200, which is defined as the ratio of the number of training instances in the largest class to the smallest~\cite{ren2020balanced}. The results for long-tailed classification are presented in Table~\ref{tab:Image_iNa_bench}. From a comprehensive perspective, our approach achieves the best average accuracy across the entire dataset. When examining the results for different data splits, our method particularly outperforms the others on subsets with fewer images (i.e., median or few-shot).


\noindent\textbf{Performance of testing inference.} To evaluate the performance of DB-OT based inference, we use the corresponding long-tailed (LT), uniform, and Reverse LT testing dataset for evaluation. We report top-1 accuracy as the evaluation metric. Specifically, we compare our method with vanilla Softmax, classifier normalization and class-aware bias, which are discussed in \cite{wu2021adversarial}. Table~\ref{tab:Inference} shows the comparison results for all the models and different testing data in CIFAR100. Our inference method outperforms and can achieve a great improvement when the testing data is reverse long-tailed distributed and the model trained by vanilla softmax performs the best using our DB-OT testing inference though it may fail with vanilla softmax prediction in testing. 

\noindent\textbf{Ablation study and Additional Results.} We do the ablation study by varying iterations and $\delta$ values of  Sinkhorn algorithm of DB-OT and more detailed clustering experiments comparing with more baselines in online appendix.


\section{Conclusion}
We have presented the so-called double-bounded optimal transport, with theoretical analysis and further derive three variants of algorithms to solve the problem. We then test our technique for the challenging yet realistic tasks of cluster-size bounded clustering, as well as long-tailed image recognition. Experimental results clearly verify its effectiveness.

\section{Acknowledgement}
 This work was partly supported by  NSEC (92370201, 62222607).

\bibliography{aaai24}

\newpage
\appendix
\section*{Appendix}
\section{Double Bounded on the
source and target}\label{app:SourceBounded}
Here we discuss why don't we assume that the $\alpha$ distribution is also constrained within upper and lower bounds? For instance, let's assume $\alpha^u = \sum_{i=1}^m \mathbf{a}^u_i\delta_{x_{i}}$ and $\alpha^d = \sum_{i=1}^m \mathbf{a}^d_i\delta_{x_{i}}$, and the coupling satisfies $\mathbf{a}^d\leq \mathbf{P}\mathbf{1}_n\leq \mathbf{a}^u$. In practice, the optimal transportation tends to transport mass vertically and towards the smaller lower bound. Without loss of generality, let's assume $\sum_i\mathbf{a}^d_i<\sum_j\mathbf{b}^d_j$. In this case, the optimal solution must satisfy $\sum_{ij}\mathbf{P}_{ij}=\sum_i\mathbf{a}^d_i$. 

We adopt the opposite approach. Without loss of generality, we assume that $\sum_i\mathbf{a}^d_i<\sum_{ij}\mathbf{P}_{ij}<\sum_j\mathbf{b}^d_j$ where $\mathbf{P}$ is the optimal solution satisfying $\mathbf{P}=\arg\min_{\mathbf{P}'\in \mathcal{C}}<\mathbf{C},\mathbf{P}'>$ where $ \mathcal{C}$ is the constraint set for double bounds for source and target histograms. 
Then there must exist the dimension $i$ and value $\delta_1>0$ satisfying
\begin{equation}
    \mathbf{a}^d_i +\delta_1 < \sum_{j}\mathbf{P}_{ij}.
\end{equation}
There must exist the dimension $j$ and value $\delta_2>0$ satisfying
\begin{equation}
     \sum_{i}\mathbf{P}_{ij}+\delta_2<\sum_j\mathbf{b}^d_j.
\end{equation}
Now we set $\mathbf{P}_{ij}' = \mathbf{P}_{ij}-\min(\delta_1,\delta_2)$,  the solution $\mathbf{P}'$ also satisfy $\mathbf{a}^d\leq\mathbf{P}'\mathbf{1}_n\leq \mathbf{a}^u$ and $\mathbf{b}^d\leq\mathbf{P}'^\top\mathbf{1}_m\leq \mathbf{b}^u$. However, we know $<\mathbf{C},\mathbf{P}'><<\mathbf{C},\mathbf{P}>$, which contradicts the assumption that $\mathbf{P}$ is the optimal solution.

\section{Proof and discussion on Prop.~\ref{Prop:KLproof} }\label{app:KLproof}
We rewrite the Prop.~\ref{Prop:KLproof} as:
\begin{proposition}[\textbf{Static Schrödinger Form}]
Redefine a general KL divergence in line with~\cite{benamou2015iterative}
$$
    \widetilde{KL}(\mathbf{P}|\mathbf{K})=\sum_{ij}\mathbf{P}_{ij}\log\frac{\mathbf{P}_{ij}}{\mathbf{K}_{ij}}-\mathbf{P}_{ij}+\mathbf{K}_{ij}.
$$
Let $\mathbf{K}_{ij}=e^{-\mathbf{C}_{ij}/\epsilon}$, the optimization in Eq.~\ref{eq:EDB-OT} is equivalent to the minimization:
$$\small{
        \mathbf{P}^*=\mathop{\arg\min}\limits_{\mathbf{P}\in \mathcal{C}(\mathbf{a},\mathbf{b}^u,\mathbf{b}^d)} \widetilde{KL}(\mathbf{P}|\mathbf{K}).
}
$$\end{proposition}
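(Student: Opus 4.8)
The plan is to show that on the common feasible set $\mathcal{C}(\mathbf{a},\mathbf{b}^u,\mathbf{b}^d)$ the generalized divergence $\widetilde{KL}(\mathbf{P}\mid\mathbf{K})$ and the entropic DB-OT objective $\langle\mathbf{C},\mathbf{P}\rangle-\epsilon H(\mathbf{P})$ coincide up to a strictly positive multiplicative constant and an additive constant; since such an affine reparametrization of the objective leaves the set of minimizers unchanged, the two programs share the same optimal coupling $\mathbf{P}^*$, which is moreover unique by the $\epsilon$-strong convexity already noted below Eq.~\ref{eq:EDB-OT}.

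Concretely, I would expand the definition term by term and substitute $\mathbf{K}_{ij}=e^{-\mathbf{C}_{ij}/\epsilon}$, so that $\log\mathbf{K}_{ij}=-\mathbf{C}_{ij}/\epsilon$:
\[
\widetilde{KL}(\mathbf{P}\mid\mathbf{K})=\sum_{ij}\mathbf{P}_{ij}\log\mathbf{P}_{ij}+\tfrac1\epsilon\langle\mathbf{C},\mathbf{P}\rangle-\sum_{ij}\mathbf{P}_{ij}+\sum_{ij}\mathbf{K}_{ij}.
\]
I would then recognize the first and third sums as the shifted negative entropy used in the paper: since $H(\mathbf{P})=-\langle\mathbf{P},\log\mathbf{P}-\mathbf{1}_{m\times n}\rangle=-\sum_{ij}(\mathbf{P}_{ij}\log\mathbf{P}_{ij}-\mathbf{P}_{ij})$, we have $\sum_{ij}\mathbf{P}_{ij}\log\mathbf{P}_{ij}-\sum_{ij}\mathbf{P}_{ij}=-H(\mathbf{P})$, whence
\[
\widetilde{KL}(\mathbf{P}\mid\mathbf{K})=\tfrac1\epsilon\bigl(\langle\mathbf{C},\mathbf{P}\rangle-\epsilon H(\mathbf{P})\bigr)+\textstyle\sum_{ij}\mathbf{K}_{ij}.
\]
The term $\sum_{ij}\mathbf{K}_{ij}$ does not depend on $\mathbf{P}$ and $1/\epsilon>0$, so minimizing $\widetilde{KL}(\mathbf{P}\mid\mathbf{K})$ over $\mathcal{C}(\mathbf{a},\mathbf{b}^u,\mathbf{b}^d)$ is equivalent to minimizing $\langle\mathbf{C},\mathbf{P}\rangle-\epsilon H(\mathbf{P})$ over the same set, i.e. to Eq.~\ref{eq:EDB-OT}, and the $\arg\min$ $\mathbf{P}^*$ is the same.

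The computation is routine, and the only point I would be careful about is the bookkeeping of the linear terms: unlike vanilla OT, the feasible set here does \emph{not} fix the total mass $\sum_{ij}\mathbf{P}_{ij}$, since the column sums are only pinned between $\mathbf{b}^d$ and $\mathbf{b}^u$, so $-\sum_{ij}\mathbf{P}_{ij}$ is genuinely non-constant on $\mathcal{C}(\mathbf{a},\mathbf{b}^u,\mathbf{b}^d)$. This is exactly why the proposition must be phrased with the generalized divergence $\widetilde{KL}$ of~\cite{benamou2015iterative} (carrying the $-\mathbf{P}_{ij}+\mathbf{K}_{ij}$ correction) together with the shifted entropy $H(\mathbf{P})=-\langle\mathbf{P},\log\mathbf{P}-\mathbf{1}_{m\times n}\rangle$: the $-\mathbf{P}_{ij}$ inside $\widetilde{KL}$ is cancelled precisely by the $+\mathbf{P}_{ij}$ coming from the $\mathbf{1}_{m\times n}$ shift in $H$, leaving no spurious linear term. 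Once these shifts are lined up the equivalence is immediate, so I do not expect any genuine obstacle beyond this matching.
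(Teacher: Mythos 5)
Your proof is correct and follows essentially the same computation as the paper's: expand $\widetilde{KL}$, substitute $\log\mathbf{K}_{ij}=-\mathbf{C}_{ij}/\epsilon$, fold $\sum_{ij}(\mathbf{P}_{ij}\log\mathbf{P}_{ij}-\mathbf{P}_{ij})$ into $-H(\mathbf{P})$, and discard the additive constant $\sum_{ij}\mathbf{K}_{ij}$ and the positive factor $1/\epsilon$. One small slip in your closing remark: the total mass $\sum_{ij}\mathbf{P}_{ij}=\mathbf{1}_m^\top\mathbf{P}\mathbf{1}_n=\sum_i\mathbf{a}_i$ \emph{is} in fact fixed on $\mathcal{C}(\mathbf{a},\mathbf{b}^u,\mathbf{b}^d)$, because the row-sum constraint $\mathbf{P}\mathbf{1}_n=\mathbf{a}$ remains an equality and only the column sums are relaxed; this does not affect your argument, which cancels the linear term exactly regardless.
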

\begin{proof}
From the definition of $\widetilde{KL}$ and $\mathbf{K}_{ij}=e^{-\mathbf{C}_{ij}/\epsilon}$, we have
\begin{equation}
\begin{aligned}
    \min_{\mathbf{P}\in \mathcal{C}}\widetilde{KL}(\mathbf{P}|\mathbf{K})&=\min_{\mathbf{P}\in \mathcal{C}}\sum_{ij}\left( \mathbf{P}_{ij} \log\mathbf{P}_{ij} - \mathbf{P}_{ij}- \mathbf{P}_{ij}\log e^{-\mathbf{C}_{ij}/\epsilon}\right)\\
    &=\min_{\mathbf{P}\in \mathcal{C}}\sum_{ij}\left(  \mathbf{P}_{ij} \left(\log{\mathbf{P}_{ij}}-1\right)+\frac{1}{\epsilon}\mathbf{P}_{ij}\mathbf{C}_{ij}\right)\\
    & =\min_{\mathbf{P}\in \mathcal{C}} \frac{1}{\epsilon}<\mathbf{C},\mathbf{P}>-H(\mathbf{P})
\end{aligned}
\end{equation}
So the two optimization in Eq.~\ref{eq:EDB-OT} and Eq.~\ref{eq:static} are equal for $\mathcal{C}=\mathcal{C}(\mathbf{a},\mathbf{b}^u,\mathbf{b}^d)$.
\end{proof}
Then we show the proof of Eq.\ref{eq:bregman_2}, i.e.
\begin{equation}
\small{
\begin{aligned}
    & Proj_{\mathcal{C}_1}^{KL}(\mathbf{P})=\text{diag}\left(\frac{\mathbf{a}}{\mathbf{P}\mathbf{1}_n}\right)\mathbf{P}\\
    & Proj_{\mathcal{C}_2}^{KL}(\mathbf{P})=\mathbf{P}\text{diag}\left(\max\left(\frac{\mathbf{b}^d}{\mathbf{P}^\top\mathbf{1}_m},\mathbf{1}_n\right)\right)\\
    & Proj_{\mathcal{C}_3}^{KL}(\mathbf{P})=\mathbf{P}\text{diag}\left(\min\left(\frac{\mathbf{b}^u}{\mathbf{P}^\top\mathbf{1}_m},\mathbf{1}_n\right)\right),\\
\end{aligned}
}
\end{equation}
\begin{proof}
For the projection $\mathbf{P}=Proj_{\mathcal{C}}^{KL}(\mathbf{Q})$, it equals to the optimization
\begin{equation}
    \min_{\mathbf{P}\in\mathcal{C}}KL(\mathbf{P}|\mathbf{Q}),
\end{equation}
when $\mathcal{C}=\mathcal{C}_1=\{\mathbf{P}|\mathbf{P}\mathbf{1}_n=\mathbf{a}\}$, we can get the Lagrange function:
\begin{equation}
    \frac{\partial\mathcal{L}(\mathbf{P})}{\partial \mathbf{P}_{ij}}=\log\mathbf{P}_{ij}-\log \mathbf{f}_i=0,
\end{equation}
where $\mathbf{f}$ is the Lagrange variable. Then we can get
\begin{equation}
    \mathbf{P}_{ij}= e^{\mathbf{f}_i},
\end{equation}
which equals to
\begin{equation}
    \mathbf{P} = \text{diag}(e^{\mathbf{f}})\mathbf{Q}.
\end{equation}
Due to $\mathbf{P}\mathbf{1}_n=\mathbf{a}$, we can easily get
\begin{equation}
    e^{\mathbf{f}}=\frac{\mathbf{a}}{\mathbf{Q}\mathbf{1}_n},
\end{equation}
thus we can get
\begin{equation}
    \mathbf{P} = \text{diag}({\frac{\mathbf{a}}{\mathbf{Q}\mathbf{1}_n}})\mathbf{Q}.
\end{equation}
When $\mathcal{C}=\mathcal{C}_2=\{\mathbf{P}|\mathbf{P}^\top\mathbf{1}_m\geq \mathbf{b}^d\}$, introduce the Lagrange variable $\mathbf{g}$ for the optimization:
\begin{equation}
    \frac{\partial\mathcal{L}(\mathbf{P})}{\partial \mathbf{P}_{ij}}=\log\mathbf{P}_{ij}-\log \mathbf{g}_j=0,
\end{equation}
where $\mathbf{g}\geq\mathbf{0}$.Similarly, we can get
\begin{equation}
    \mathbf{P} = \mathbf{Q}\text{diag}(e^{\mathbf{g}}).
\end{equation}
When $\mathbf{g}_j > 0$, we know $(\mathbf{P}^\top \mathbf{1}_m)_j = \mathbf{b}^d_j$, then 
\begin{equation}
    e^{\mathbf{g}_j} = \frac{\mathbf{b}_j}{(\mathbf{Q}\mathbf{1}_m)_j}.
\end{equation}
While $\mathbf{g}_j = 0$, we know $(\mathbf{P}^\top \mathbf{1}_m)_j > \mathbf{b}^d_j$, then $e^{\mathbf{g}_j} =1$. Thus we have
\begin{equation}
    \mathbf{P} = \mathbf{Q}\text{diag}\left(\max\left(\frac{\mathbf{b}^d}{\mathbf{Q}^\top\mathbf{1}_m},\mathbf{1}_n\right)\right)
\end{equation}
For $\mathcal{C}=\mathcal{C}_3=\{\mathbf{P}|\mathbf{P}^\top\mathbf{1}_m\leq \mathbf{b}^u\}$, the proof is similar with the constraint set $\mathcal{C}_2$.
\end{proof}
Besides, the Bregman algorithm can be simplifed. Specifically, the first equation in Eq.~\ref{eq:bregman_2} indicates that the rows of coupling matrix $\mathbf{P}$ are normalized to match the vector $\mathbf{a}$, which means $\mathbf{P}_{i,:}\xleftarrow{}\mathbf{a}_i\cdot \mathbf{P}_{i,:}/ \sum_j \mathbf{P}_{ij}$ for each row $i$. The second equation signifies that for the $j$-th column of the coupling matrix $\mathbf{P}$, if $(\mathbf{P}^\top\mathbf{1}_m)_j<\mathbf{b}^d_j$, the corresponding column is normalized to match $\mathbf{b}^d_j$; otherwise, it remains unchanged. Similarly, in the third equation, if $(\mathbf{P}^\top\mathbf{1}_m)_j>\mathbf{b}^u_j$, the corresponding column is normalized to match $\mathbf{b}^u_j$; otherwise, it remains unchanged. So the projection in Eq.~\ref{eq:bregman_2} can also be simplified as the algorithm of row sum to $\mathbf{a}$ and column sum to $\mathbf{b}$ algorithms given the iteration number $l$:
\begin{equation}
\small{
\begin{aligned}
 &\mathbf{P}^{(l+1/2)}=    \text{diag}\left(\frac{\mathbf{a}}{\mathbf{P}^{(l)}\mathbf{1}_n}\right)\mathbf{P}^{(l)},   \\
 &\mathbf{P}^{(l+1)}=\mathbf{P}^{(l+1/2)}\text{diag}\left(\frac{\mathbf{b}}{(\mathbf{P}^\top)^{(l+1/2)}\mathbf{1}_m}\right),
\end{aligned}
}
\end{equation}
where $\mathbf{b}$ is specified as
\begin{equation}
\small{
    \mathbf{b}_j=\left\{
    \begin{aligned}
        &\mathbf{b}_j^d, & &  ((\mathbf{P}^\top)^{(l+1/2)}\mathbf{1}_m)_j\leq\mathbf{b}^d_j,\\
        &\mathbf{b}_j^u, & &  ((\mathbf{P}^\top)^{(l+1/2)}\mathbf{1}_m)_j\geq\mathbf{b}^u_j,\\
        &((\mathbf{P}^\top)^{(l+1/2)}\mathbf{1}_m)_j, & & \text{otherwise}.
    \end{aligned}
    \right.
    }
\end{equation}

\begin{figure*}[tb!]
    \centering
    \includegraphics[width=0.9\textwidth]{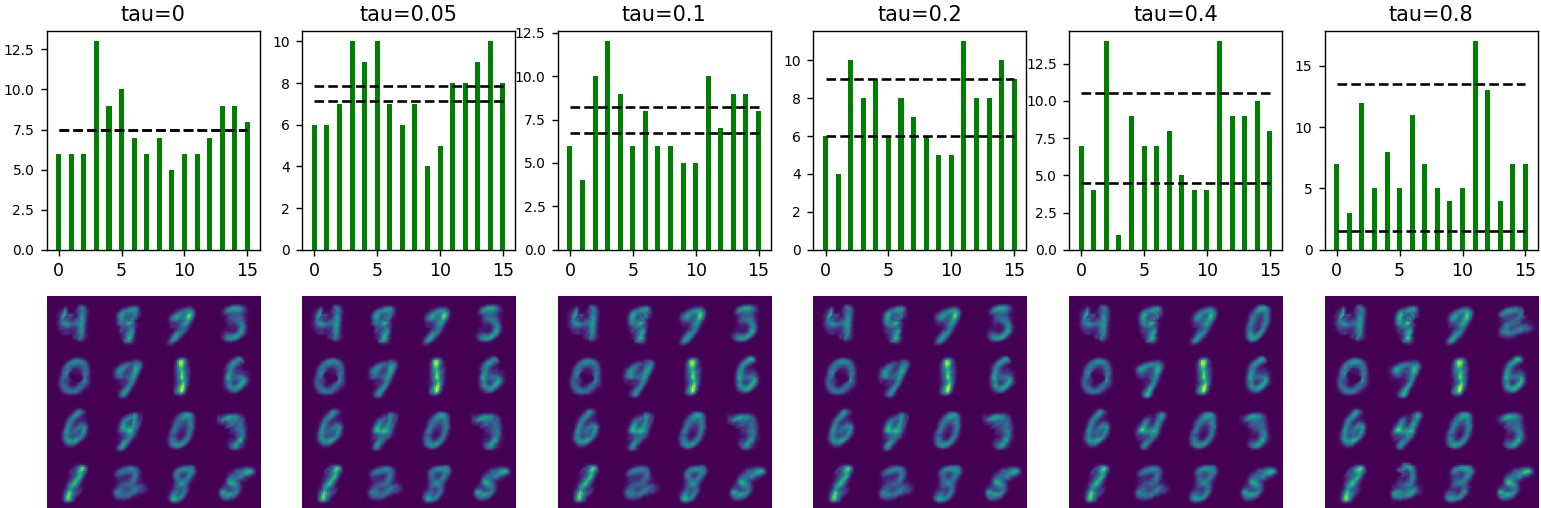}
    \caption{The result of clustering with different bounds. The top-1 accuracy is 72.50, 70.83, 75.00, 70.00, 70.00, 68.33 respectively. The six histograms indicate the number of each class and the balck dotted line is the bound of each case.
    } 
    \label{fig:diffrentTauMu}
\end{figure*}

\section{Proof and discussion on Prop.~\ref{prop:uKv}}\label{app:uKv}
We rewrite the Prop.~\ref{prop:uKv} as
\begin{proposition}[\textbf{Solution Property}]
The optimal solution of the optimization in Eq.~\ref{eq:EDB-OT} is unique and has the form 
$$
    \mathbf{P}^* = \text{diag}(\mathbf{u})\mathbf{K}\text{diag}(\mathbf{q}\odot\mathbf{v})
$$
where $\mathbf{q}\odot\mathbf{v}=(\mathbf{q}_i\mathbf{v}_i)\in \mathbb{R}_n^+$ and the three scaling variables $(\mathbf{u},\mathbf{q},\mathbf{v})$ satisfy $\mathbf{u}\in R^+_m$, $\mathbf{0}_n\leq\mathbf{v}\leq\mathbf{1}_n$ and $ \mathbf{q}\geq \mathbf{1}_n$.
\end{proposition}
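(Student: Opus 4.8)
The plan is to derive the stated form of $\mathbf{P}^*$ by writing the Lagrangian of the entropic DB-OT problem in Eq.~\ref{eq:EDB-OT}, using the equality constraint $\mathbf{P}\mathbf{1}_n=\mathbf{a}$ and the two inequality constraints $\mathbf{P}^\top\mathbf{1}_m\geq\mathbf{b}^d$ and $\mathbf{P}^\top\mathbf{1}_m\leq\mathbf{b}^u$, together with the implicit positivity constraint $\mathbf{P}\geq\mathbf{0}$ (which, as in vanilla entropic OT, is automatically enforced by the $\log$ term and can be ignored). Uniqueness is immediate: the objective $\langle\mathbf{C},\mathbf{P}\rangle-\epsilon H(\mathbf{P})$ is $\epsilon$-strongly convex in $\mathbf{P}$ and the feasible set $\mathcal{C}(\mathbf{a},\mathbf{b}^u,\mathbf{b}^d)$ is convex (and nonempty when $\sum_i\mathbf{a}_i$ lies in an admissible range), so there is exactly one minimizer; this is already noted right after Eq.~\ref{eq:EDB-OT}.

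For the form, introduce a Lagrange multiplier $\mathbf{f}\in\mathbb{R}^m$ for the row constraint, and multipliers $\boldsymbol{\lambda}\geq\mathbf{0}$ for $\mathbf{b}^d-\mathbf{P}^\top\mathbf{1}_m\leq\mathbf{0}$ and $\boldsymbol{\mu}\geq\mathbf{0}$ for $\mathbf{P}^\top\mathbf{1}_m-\mathbf{b}^u\leq\mathbf{0}$. Setting $\partial\mathcal{L}/\partial\mathbf{P}_{ij}=0$ gives $\mathbf{C}_{ij}+\epsilon\log\mathbf{P}_{ij}-\mathbf{f}_i+\mu_j-\lambda_j=0$, hence
\begin{equation}
\mathbf{P}_{ij}=e^{\mathbf{f}_i/\epsilon}\,e^{-\mathbf{C}_{ij}/\epsilon}\,e^{(\lambda_j-\mu_j)/\epsilon}=e^{\mathbf{f}_i/\epsilon}\,\mathbf{K}_{ij}\,e^{\lambda_j/\epsilon}\,e^{-\mu_j/\epsilon}.
\end{equation}
Now set $\mathbf{u}=e^{\mathbf{f}/\epsilon}\in\mathbb{R}^+_m$, $\mathbf{q}=e^{\boldsymbol{\lambda}/\epsilon}$, and $\mathbf{v}=e^{-\boldsymbol{\mu}/\epsilon}$. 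Since $\boldsymbol{\lambda}\geq\mathbf{0}$ we get $\mathbf{q}\geq\mathbf{1}_n$, and since $\boldsymbol{\mu}\geq\mathbf{0}$ we get $\mathbf{0}_n<\mathbf{v}\leq\mathbf{1}_n$; therefore $\mathbf{P}^*=\mathrm{diag}(\mathbf{u})\,\mathbf{K}\,\mathrm{diag}(\mathbf{q}\odot\mathbf{v})$ with the claimed sign restrictions, and $\mathbf{q}\odot\mathbf{v}\in\mathbb{R}^+_n$ entrywise. I would also remark that complementary slackness ($\lambda_j>0\Rightarrow(\mathbf{P}^{*\top}\mathbf{1}_m)_j=\mathbf{b}^d_j$ and $\mu_j>0\Rightarrow(\mathbf{P}^{*\top}\mathbf{1}_m)_j=\mathbf{b}^u_j$, with at most one active per column since $\mathbf{b}^d\leq\mathbf{b}^u$) is consistent with the Sinkhorn updates in Eq.~\ref{eq:sinkhorn2}, which tie the problem together, though it is not strictly needed for the statement.

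The main obstacle is justifying that the KKT stationarity conditions are both necessary and sufficient here: I would invoke strong convexity plus a Slater-type argument (the relative interior of $\mathcal{C}(\mathbf{a},\mathbf{b}^u,\mathbf{b}^d)$ is nonempty whenever $\sum_j\mathbf{b}^d_j<\sum_i\mathbf{a}_i<\sum_j\mathbf{b}^u_j$, so strong duality holds and the KKT point is the global optimum), and note the boundary cases $\sum_i\mathbf{a}_i=\sum_j\mathbf{b}^d_j$ or $=\sum_j\mathbf{b}^u_j$ reduce to vanilla entropic OT. A secondary subtlety is arguing $\mathbf{P}^*_{ij}>0$ for all $i,j$ so that the $\log$ is well-defined and the multipliers $\mathbf{u},\mathbf{q},\mathbf{v}$ are genuinely finite and positive; this follows because $\mathbf{K}_{ij}=e^{-\mathbf{C}_{ij}/\epsilon}>0$ and the entropy forces strictly positive mass everywhere, exactly as in the classical Sinkhorn setting of~\cite{cuturi2013sinkhorn}. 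With these points in place the representation follows immediately.
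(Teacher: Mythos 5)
Your proposal is correct and follows essentially the same route as the paper's own proof: form the Lagrangian with one equality multiplier and two sign-constrained inequality multipliers, set the first-order condition $\partial\mathcal{L}/\partial\mathbf{P}_{ij}=0$, and exponentiate the multipliers to obtain $\mathbf{u}=e^{\mathbf{f}/\epsilon}$, $\mathbf{q}\geq\mathbf{1}_n$, $\mathbf{v}\leq\mathbf{1}_n$ (the paper absorbs the signs into $\mathbf{g}\geq\mathbf{0}$, $\mathbf{h}\leq\mathbf{0}$ where you use standard nonnegative KKT multipliers $\boldsymbol{\lambda},\boldsymbol{\mu}$ and write $\mathbf{v}=e^{-\boldsymbol{\mu}/\epsilon}$, a purely cosmetic difference). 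Your added remarks on Slater's condition, complementary slackness, and strict positivity of $\mathbf{P}^*$ are sound and actually tighten points the paper leaves implicit.
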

\begin{proof}
The minimization of Eq.~\ref{eq:EDB-OT} can be specified with Lagrange method:
\begin{equation}
\small{
    \mathcal{L} = <\mathbf{C},\mathbf{P}>-\epsilon H(\mathbf{P})-\mathbf{f}(\mathbf{P}\mathbf{1}_n-\mathbf{a})-\mathbf{g}(\mathbf{P}^\top \mathbf{1}_m -\mathbf{b}^d)-\mathbf{h}(\mathbf{P}^\top \mathbf{1}_m -\mathbf{b}^u)
}
\end{equation}
where $\mathbf{f}\in \mathbb{R}^n$, $\mathbf{g}\geq \mathbf{0}$ and $\mathbf{h}\leq \mathbf{0}$. Then we can get
\begin{equation}
    \frac{\partial \mathcal{L}}{\partial \mathbf{P}_{ij}}= \log\mathbf{C}_{ij}-\epsilon \log \mathbf{P}_{ij}-\mathbf{f}_i -\mathbf{g}_j-\mathbf{h}_j=0.
\end{equation}
Then we can get
\begin{equation}
    \mathbf{P}=\text{diag}(e^{\mathbf{f}/\epsilon})e^{-\mathbf{C}/\epsilon}\text{diag}(e^{(\mathbf{g}+\mathbf{h})/\epsilon}).
\end{equation}
Due to $\mathbf{g}\geq \mathbf{0}$ and $\mathbf{h}\leq \mathbf{0}$, we set
\begin{equation}
    \begin{aligned}
     & \mathbf{u}= e^{\mathbf{f}/\epsilon}\\
     & \mathbf{q}= e^{\mathbf{g}/\epsilon}\geq \mathbf{1}_n\\
     & \mathbf{v}= e^{\mathbf{h}/\epsilon}\leq \mathbf{1}_n
    \end{aligned}
\end{equation}
then we can get the optimal solution satisfying
\begin{equation}\label{eq:ukqv}
    \mathbf{P} = \text{diag}(\mathbf{u})\mathbf{K}\text{diag}(\mathbf{q}\odot\mathbf{v}).
\end{equation}
\end{proof}
Then we prove Eq.~\ref{eq:sinkhorn2}, i.e.
\begin{equation}
\small{
\begin{aligned}
    &\mathbf{u}^{(l+1)} =  \frac{\mathbf{a}}{K(\mathbf{v}^{(l)}\odot\mathbf{q}^{(l)})},  \\
    & \mathbf{q}^{(l+1)} = 
    \max\left(\frac{\mathbf{b}^d}{\mathbf{u}^{(l+1)}K\odot\mathbf{v}^{(l)}},\mathbf{1}_n\right),\\
    &\mathbf{v}^{(l+1)} =  \min\left(\frac{\mathbf{b}_u}{\mathbf{u}^{(l+1)}K\odot\mathbf{q}^{(l+1)}},\mathbf{1}_n\right).
\end{aligned}
}
\end{equation}
\begin{proof}
Due to Eq.~\ref{eq:ukqv} and the constraints $\mathbf{P}\mathbf{1}_n=\mathbf{a}$  , we know
\begin{equation}
    \mathbf{u} \odot (\mathbf{K}(\mathbf{q}\odot\mathbf{v}))=\mathbf{a},
\end{equation}
thus we have 
\begin{equation}
    \mathbf{u} = \frac{\mathbf{a}}{\mathbf{K}(\mathbf{q}\odot\mathbf{v})}
\end{equation}
And we also have
$\mathbf{b}^d\leq \mathbf{P}^\top\mathbf{1}_m\leq\mathbf{b}^u$. 
When $\mathbf{q}_j=1$, we can get $\mathbf{g}_j=0$ and $(\mathbf{P}\mathbf{1}_m)_j>\mathbf{b}^d_j$, while  $\mathbf{q}_j>1$, we can get $\mathbf{g}_j>0$ and $(\mathbf{P}\mathbf{1}_m)_j=\mathbf{b}^d_j$, then
\begin{equation}
    (\mathbf{q}\odot\mathbf{v})_j\cdot(\mathbf{K}^\top\mathbf{u})_j=\mathbf{b}^d_j.
\end{equation}
Thus we have
\begin{equation}
    \mathbf{q}_j = \frac{\mathbf{b}^d_j}{\mathbf{v}_j\cdot(\mathbf{K}^\top\mathbf{u})_j}.
\end{equation}
Combining the situation for $\mathbf{q}_j=1$, we have
\begin{equation}\label{eq:qeq}
     \mathbf{q} = \max(\frac{\mathbf{b}^d}{\mathbf{v}\odot(\mathbf{K}^\top\mathbf{u})},\mathbf{1}_n)
\end{equation}
The proof of equation
\begin{equation}
    \mathbf{v} = \max(\frac{\mathbf{b}^u}{\mathbf{q}\odot(\mathbf{K}^\top\mathbf{u})},\mathbf{1}_n)
\end{equation}
is similar to the proof of $\mathbf{q}$ in Eq.~\ref{eq:qeq}. We do not repeat here.
\end{proof}

\section{Proof and discussion on Prop.~\ref{Prop:dual}}\label{app:dual}
We rewrite the Prop.~\ref{Prop:dual} here.
\begin{proposition}[\textbf{Dual Formulation}]
From the  optimization in Eq.~\ref{eq:EDB-OT}, we can get its dual formulation by maximizing
\begin{equation}
     L = <\mathbf{f},\mathbf{a}>+<\mathbf{g},\mathbf{b}^d> +  <\mathbf{h},\mathbf{b}^u> - \epsilon B(\mathbf{f},\mathbf{g},\mathbf{h})
\end{equation}
where $\mathbf{f}\in \mathbb{R}^n$, $\mathbf{g}\geq  \mathbf{0}$ and $\mathbf{h}\leq\mathbf{0}$ are the corresponding dual variables and $B(\mathbf{f},\mathbf{g},\mathbf{h}) = <e^{\mathbf{f}/\epsilon},\mathbf{K}e^{(\mathbf{g}+\mathbf{h})/\epsilon}>$.
\end{proposition}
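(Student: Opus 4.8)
The plan is to derive the dual by the standard Lagrangian recipe, mirroring exactly the argument already used for Proposition~\ref{prop:uKv}. First I would attach a free multiplier $\mathbf{f}$ to the equality $\mathbf{P}\mathbf{1}_n=\mathbf{a}$, a multiplier $\mathbf{g}\geq\mathbf{0}$ to the inequality $\mathbf{P}^\top\mathbf{1}_m\geq\mathbf{b}^d$, and a multiplier $\mathbf{h}\leq\mathbf{0}$ to $\mathbf{P}^\top\mathbf{1}_m\leq\mathbf{b}^u$, and form
\[
\mathcal{L}(\mathbf{P},\mathbf{f},\mathbf{g},\mathbf{h})=<\mathbf{C},\mathbf{P}>-\epsilon H(\mathbf{P})-<\mathbf{f},\mathbf{P}\mathbf{1}_n-\mathbf{a}>-<\mathbf{g},\mathbf{P}^\top\mathbf{1}_m-\mathbf{b}^d>-<\mathbf{h},\mathbf{P}^\top\mathbf{1}_m-\mathbf{b}^u>.
\]
The sign conventions $\mathbf{g}\geq\mathbf{0}$, $\mathbf{h}\leq\mathbf{0}$ are chosen precisely so that $\mathcal{L}\leq <\mathbf{C},\mathbf{P}>-\epsilon H(\mathbf{P})$ for every feasible $\mathbf{P}$, so the dual function $q(\mathbf{f},\mathbf{g},\mathbf{h})=\inf_{\mathbf{P}\geq\mathbf{0}}\mathcal{L}$ is a valid lower bound on the primal optimum. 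Since $\partial(-\epsilon H)/\partial\mathbf{P}_{ij}=\epsilon\log\mathbf{P}_{ij}\to-\infty$ as $\mathbf{P}_{ij}\downarrow 0$, the minimizer over $\mathbf{P}\geq\mathbf{0}$ is automatically interior, so no multiplier for $\mathbf{P}\geq\mathbf{0}$ is needed.

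Second I would compute that stationary point: $\partial\mathcal{L}/\partial\mathbf{P}_{ij}=\mathbf{C}_{ij}+\epsilon\log\mathbf{P}_{ij}-\mathbf{f}_i-\mathbf{g}_j-\mathbf{h}_j=0$ gives $\mathbf{P}_{ij}=e^{\mathbf{f}_i/\epsilon}\mathbf{K}_{ij}e^{(\mathbf{g}_j+\mathbf{h}_j)/\epsilon}$ with $\mathbf{K}_{ij}=e^{-\mathbf{C}_{ij}/\epsilon}$, i.e. the scaling form of Eq.~\ref{eq:ukqv}. Plugging $\epsilon\log\mathbf{P}_{ij}=\mathbf{f}_i+\mathbf{g}_j+\mathbf{h}_j-\mathbf{C}_{ij}$ back into $-\epsilon H(\mathbf{P})=\epsilon\sum_{ij}(\mathbf{P}_{ij}\log\mathbf{P}_{ij}-\mathbf{P}_{ij})$ turns $<\mathbf{C},\mathbf{P}>-\epsilon H(\mathbf{P})$ into $\sum_{ij}\mathbf{P}_{ij}(\mathbf{f}_i+\mathbf{g}_j+\mathbf{h}_j)-\epsilon\sum_{ij}\mathbf{P}_{ij}$; and since $<\mathbf{f},\mathbf{P}\mathbf{1}_n>=\sum_{ij}\mathbf{f}_i\mathbf{P}_{ij}$, $<\mathbf{g},\mathbf{P}^\top\mathbf{1}_m>=\sum_{ij}\mathbf{g}_j\mathbf{P}_{ij}$ and $<\mathbf{h},\mathbf{P}^\top\mathbf{1}_m>=\sum_{ij}\mathbf{h}_j\mathbf{P}_{ij}$, the bilinear terms cancel and $\mathcal{L}$ collapses to $<\mathbf{f},\mathbf{a}>+<\mathbf{g},\mathbf{b}^d>+<\mathbf{h},\mathbf{b}^u>-\epsilon\sum_{ij}\mathbf{P}_{ij}$. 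Finally $\sum_{ij}\mathbf{P}_{ij}=\sum_i e^{\mathbf{f}_i/\epsilon}(\mathbf{K}e^{(\mathbf{g}+\mathbf{h})/\epsilon})_i=<e^{\mathbf{f}/\epsilon},\mathbf{K}e^{(\mathbf{g}+\mathbf{h})/\epsilon}>=B(\mathbf{f},\mathbf{g},\mathbf{h})$, so $q=L$ as claimed.

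Third, to assert that Eq.~\ref{eq:EDB-OT} is equivalent to maximizing $L$ over $\mathbf{f}$ free, $\mathbf{g}\geq\mathbf{0}$, $\mathbf{h}\leq\mathbf{0}$, I would invoke strong duality: the primal minimizes an $\epsilon$-strongly convex function over the nonempty polytope $\mathcal{C}(\mathbf{a},\mathbf{b}^u,\mathbf{b}^d)$ defined by affine (in)equalities, so Slater's condition holds (a strictly positive $\mathbf{P}$ with $\mathbf{b}^d<\mathbf{P}^\top\mathbf{1}_m<\mathbf{b}^u$ lies in the relative interior whenever $\sum_i\mathbf{a}_i\in(\sum_j\mathbf{b}^d_j,\sum_j\mathbf{b}^u_j)$; the degenerate boundary case reduces to vanilla OT of Eq.~\ref{eq:OT}). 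Hence the duality gap is zero, both optima are attained, and the recovered primal $\mathbf{P}^*=\text{diag}(e^{\mathbf{f}^*/\epsilon})\mathbf{K}\text{diag}(e^{(\mathbf{g}^*+\mathbf{h}^*)/\epsilon})$ is consistent with Proposition~\ref{prop:uKv}.

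The main obstacle I expect is bookkeeping rather than mathematics: keeping the sign conventions on $\mathbf{g}$ and $\mathbf{h}$ coherent between the constraint block and the objective — the main-text statement of Proposition~\ref{Prop:dual} and the appendix restatement already disagree on whether $\mathbf{g}\geq\mathbf{0}$ or $\mathbf{g}\leq\mathbf{0}$, and one must fix $\mathbf{g}\geq\mathbf{0}$ (dual to the lower-bound constraint $\mathbf{P}^\top\mathbf{1}_m\geq\mathbf{b}^d$) and $\mathbf{h}\leq\mathbf{0}$ — and making the ``drop the $\mathbf{P}\geq\mathbf{0}$ multiplier'' step precise, which formally requires either the interiority argument above or carrying a matrix multiplier $\mathbf{N}\geq\mathbf{0}$ for $\mathbf{P}\geq\mathbf{0}$ and using complementary slackness to show $\mathbf{N}=\mathbf{0}$. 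Everything else is the routine substitution, identical in spirit to the proof of Proposition~\ref{prop:uKv}.
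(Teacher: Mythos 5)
Your proposal is correct and follows essentially the same route as the paper's own proof: form the Lagrangian with a free multiplier on the row constraint and sign-constrained multipliers on the two column bounds, solve the first-order condition to get $\mathbf{P}_{ij}=e^{\mathbf{f}_i/\epsilon}\mathbf{K}_{ij}e^{(\mathbf{g}_j+\mathbf{h}_j)/\epsilon}$, and substitute back so that the bilinear terms cancel and only $<\mathbf{f},\mathbf{a}>+<\mathbf{g},\mathbf{b}^d>+<\mathbf{h},\mathbf{b}^u>-\epsilon B(\mathbf{f},\mathbf{g},\mathbf{h})$ survives. You are in fact somewhat more careful than the paper, which leaves the interiority argument for dropping the $\mathbf{P}\geq\mathbf{0}$ multiplier and the Slater/strong-duality step implicit, and you are right about the sign convention: the consistent choice is $\mathbf{g}\geq\mathbf{0}$, $\mathbf{h}\leq\mathbf{0}$ (as in the appendix restatement, the proof of Proposition~\ref{prop:uKv}, and the updates in Eq.~\ref{eq:dual}), so the main-text statement's $\mathbf{g}\leq\mathbf{0}$, $\mathbf{h}\geq\mathbf{0}$ is a typo.
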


\begin{proof}
For the coupling, we have
\begin{equation}
    \mathbf{P}_{ij} = e^{\mathbf{f}_i/\epsilon}e^{-\mathbf{C}_{ij}/\epsilon}e^{(\mathbf{g}_j+\mathbf{h}_j)/\epsilon},
\end{equation}
where $\mathbf{f}\in \mathbb{R}^n$, $\mathbf{g}\geq \mathbf{0}$ and $\mathbf{h}\leq \mathbf{0}$.Then the objective function is specified as
\begin{equation}\label{eq:fckgh}
    <e^{\mathbf{f}/\epsilon},\mathbf{C}\odot\mathbf{K}e^{(\mathbf{g}+\mathbf{h})/\epsilon}> - \epsilon H(\mathbf{P})
\end{equation}
where $-\epsilon H(\mathbf{P})$ is
\begin{equation}
    \begin{aligned}
     -\epsilon H(\mathbf{P})=& \epsilon<\mathbf{P},\log\mathbf{P}-\mathbf{1}_{n\times m}> \\
     =&
        <\mathbf{P},\mathbf{f}\mathbf{1}^\top_m+\mathbf{1}_n(\mathbf{g}+\mathbf{h})^\top-\mathbf{C}-\epsilon \mathbf{1}_{n\times m}>\\
    = &- <e^{-\mathbf{f}/\epsilon},\mathbf{C}\odot\mathbf{K}e^{-(\mathbf{g}+\mathbf{h})/\epsilon}>+ <\mathbf{f},\mathbf{a}>\\ &+<\mathbf{g},\mathbf{b}^d>+<\mathbf{h},\mathbf{b}^u>+
    <e^{\mathbf{f}/\epsilon},\mathbf{K}e^{(\mathbf{g}+\mathbf{h})/\epsilon}>
    \end{aligned}.
\end{equation}
So the following holds where $B(\mathbf{f},\mathbf{g},\mathbf{h})=<e^{\mathbf{f}/\epsilon},\mathbf{K}e^{(\mathbf{g}+\mathbf{h})/\epsilon}>$:
\begin{equation}
\begin{aligned}
    & \min_{\mathbf{P}\in \mathcal{C}} <\mathbf{C},\mathbf{P}>-\epsilon H(\mathbf{P})\\
   = &\max_{\mathbf{f},\mathbf{g}\geq \mathbf{0},\mathbf{h}\leq \mathbf{0}}  <\mathbf{f},\mathbf{a}> +  <\mathbf{g},\mathbf{b}^d>+<\mathbf{h},\mathbf{b}^u> - \epsilon B(\mathbf{f},\mathbf{g},\mathbf{h})  
\end{aligned}
\end{equation}
Therefore, the first term in Eq.~\ref{eq:fckgh} cancels out with the first term in the entropy above.
The remaining terms are those appearing in Eq.~\ref{eq:dual}.
\end{proof}
\section{Experimental Details}\label{app:ExperimentDetails}
\subsection{Hardware and Software}
We use Intel Core i9-10920X CPU @ 3.50GHz with Nvidia GeForce RTX 2080 Ti GPU for model training. We take single GPU to train models on CIFAR-10-LT, CIRFAR-100-LT, ImageNet-LT. We implement our proposed algorithm with PyTorch-2.0.1 for all experiments. 

\subsection{More Experimental Setting Details and Results for Image Classification}

We perform the long-tailed image classification task on CIFAR10-LT, CIFAR100-LT~\cite{krizhevsky2009learning}, and Imagenet-LT~\cite{liu2019large} datasets, and evaluate on balanced testing data by reporting top-1 accuracy. For CIFAR10 and CIFAR100, the experiments of image classificaiton tasks are based on ResNet32~\cite{he2016deep} as the backbone with 0.02 and 0.005 learning rate respectively, while for Imagenet dataset, we use ResNet10 for training with 0.005 learning rate. We train the CIFAR10 and CIFAR100 data with 20000 and 30000 iterations on a single GPU and imbalanced ratio is set as 200, 100, 10, resp. For testing inference, we construct the function of Softmax, Classifier Normalization, Class Aware Bias and ours method to evaluate the accuracy on the testing set. We set the $\epsilon$ of sinkhorn as 1 to fit the training function, \textbf{a} as all one vector and use the number of each class in testing set to determine the \textbf{b} as sinkhorn bounds.

\begin{table}[tb!]
    \centering
    \caption{{Top-1 accuracy ($\%$)
   for CIFAR10-LT image classification with different inference methods. }} \label{tab:infer}
    \resizebox{0.48\textwidth}{!}{
    \begin{tabular}{l||ccc||ccc}
      \toprule[1.0pt]
       \multirow{2}{*}{\textbf{\large Test Inference}} & \multicolumn{3}{c}{\textbf{\large DB-OT loss}} &
       \multicolumn{3}{c}{\textbf{\large CBCE}}\\ 
       \cline{2-7}
        & LT & Uniform & Revers LT & LT & Uniform & Revers LT\\ 
       \midrule[0.6pt]
        Vanilla Softmax & 85.3 & 79.7 & 64.1 & 6.2 & 54.6 & 91.6 \\
        classifier normalize\cite{kang2019decoupling}& 79.1 & 81.1 & 73.7 & 6.5 & 54.9 & 91.6 \\
        Class-aware bias\cite{menon2020long} & 46.5 & 79.7 & 35.8 & 4.8 & 54.6 & 91.7 \\
        DB-OT inference (ours) & \textbf{89.5} & \textbf{81.9} & \textbf{82.4} & \textbf{65.5} & \textbf{67.8} & \textbf{92.1}\\
      \bottomrule[1.0pt]
    \end{tabular}
    }
    \label{tab:}
\end{table}

\subsection{More Experimental Setting Details and Results for Clustering}

We perform the clustering task on the subset of MNIST(select 12 images each class) and evaluate the efficiency by top-1 accuracy. We set the training iteration as 5 and the inner iteration of sinkhorn has the maximum number of 1000. As for training, we use the similar method as K-means to generate the initial clustering centers. We also construct the evaluating function to show the result of clustering and calculate the top-1 accuracy.

\section{Baselines of Inference in Testing}\label{app:InferenceBaselines}

\subsubsection{Classifier Normalization.} This inference method is proposed in \cite{kang2019decoupling}, which reset the logits in testing as
\begin{equation}
    h_j = (W_j/||W_j||^\tau)^\top f(x),
\end{equation}
where $\tau$ is the hyper-parameter,  $f(x)$ is the feature of sample $x$ and $W_j$ is the linear projection to the class $j$. Then we use the softmax based on $h_j$ as the prediction results.
\begin{table}[tb!]
    \centering
    \caption{\small{Top-1 accuracy (\%)
   for CIFAR10-LT image classification with different training and testing iteration. }} 
    \resizebox{0.48\textwidth}{!}{
    \begin{tabular}{l||c||c||c||c||c||c||c}
      \toprule[1.0pt]
       \multirow{2}{*}{\textbf{\large Training iteration}} & \multicolumn{7}{c}{\textbf{\large Testing iteration}} \\ 
       \cline{2-8}
        & iter=10 & iter=30 & iter=50 & iter=70 & iter=100 & iter=150 & final acc (final iter) \\ 
       \midrule[0.6pt]
        iter=1 & 75.2 & 76.0 & 75.7 & 75.6 & 77.1 & 75.0 & 76.4 (160) \\
        iter=2 & 80.1 & 80.8 & 80.0 & 80.3 & 81.0 & 80.8 & 80.7 (250) \\
        iter=3 & 83.6 & 83.4 & 83.1 & 83.1 & 83.3 & 84.1 & 83.5 (220) \\
        iter=4 & 82.3 & 82.2 & 82.5 & 82.5 & 83.2 & 83.6 & 82.0 (230) \\
        iter=5 & 78.5 & 78.2 & 75.5 & 76.9 & 77.3 & 77.4 & 77.8 (170) \\
      \bottomrule[1.0pt]
    \end{tabular}
    }
    \label{tab:differentIteration}
\end{table}
\subsubsection{Classifier Normalization.} This inference method is proposed in \cite{menon2020long}, which set the new logits in testing process as
\begin{equation}
     h_j = W_jf(x)-\tau\log n_j
\end{equation}
where $n_j$ corresponds to the ratio of long-tailed ratio. Tab.\ref{tab:infer} shows the results on CIFAR-10.

\section{Ablation Study}\label{app:Ablation}

\subsection{Ablation of Varying Bounds on Clustering}
The Fig~.\ref{fig:diffrentTauMu} is the centroids of clustering on MNIST using Wasserstein distance with different $\tau$, which shows the influence of varing bounds on clustering. Each figure has the same $\epsilon=0.001$ and the training iteration as 5. 

In addition, we also do experiment on not uniform bounds and evaluate the top-1 accuracy. The results of linear increasing $\tau$ (first number is 0 and the last is 15/16 while the whole vector is a arithmetic progression) and linear decreasing $\tau$ (the inverse list of linear increasing $\tau$) is 67.50 and 71.67 respectively. We also set the list of $\tau$ as a 16 number of Gaussian, whose sum is 10, with the middle point 8 and the STD of the list is 5. The top-1 accuracy under this bound is 67.50 while under the corresponding Gaussian list (1 minus the previous list) the accuracy is 70.00.

\subsection{Ablation of Varying Iteration on Classification}
Because there are sinkhorn function in training and inference, we do experiment with different iteration on CIFAR10 to find the influence that varying iteration may cause.

Tab.\ref{tab:differentIteration} is the result of the experiment. We set the iteration in training loss from 1 to 5 and the iteration in inference as 10, 30, 50, 70, 100, 150. As the sinkhorn in DB-OT method will stop when the change of the norm of iterative matrix is smaller than a specific number, the last column is the final accuracy with the stop iteration in the brackets.

\subsection{Ablation of Varying Delta on Classification}
In order to find the influence of varying delta in both training loss and the inference function, we try different delta in these two function and evaluate the top-1 accuracy on CIFAR10 to see the difference.

From Tab.~\ref{tab:AblationCIFAR10Delta}, it's obvious that the top-1 accuracy reduces when the inference delta increases, while the accuracy will reach the highest when training delta between 0.1 to 0.4.

\begin{table*}[tb!]
    \centering
    \caption{\small{Top-1 accuracy (\%) of
    CIFAR-10 for the comparison of different inference delta given several trained models with different delta.}} 
    \resizebox{0.99\textwidth}{!}{
    \begin{tabular}{l||cc||cc||cc||cc||cc||cc}
      \toprule[1.0pt]
       \multirow{2}{*}{\textbf{\large models or delta of training}} & \multicolumn{2}{c||}{\textbf{Inference  $\delta = $0}} & \multicolumn{2}{c||}{\textbf{Inference  $\delta = $0.05}}  & \multicolumn{2}{c||}{\textbf{Inference  $\delta = $0.1}}  &
       \multicolumn{2}{c||}{\textbf{Inference  $\delta = $0.2}} &
       \multicolumn{2}{c||}{\textbf{Inference  $\delta = $0.4}} &
       \multicolumn{2}{c}{\textbf{Inference  $\delta = $0.8}} \\ 
       \cline{2-13}
        & All & Many & All & Many & All & Many & All & Many & All & Many & All & many\\ 
      \midrule[0.6pt]
      Balanced Softmax & 81.0 & 81.5 & 81.0 & 82.2 & 80.8 & 83.0 & 80.2 & 83.9 & 79.3 & 83.4 & 79.2 & 83.3 \\
       Focal Loss & 79.3 & 78.9 & 79.3 & 79.6 & 79.0 & 79.9 & 78.2 & 80.6 & 75.3 & 80.4 & 72.7 & 79.7 \\

       Ours($\delta=0$) & 81.8 & 81.8 & 81.7 & 81.8 & 81.5 & 81.8 & 81.0 & 81.2 & 80.3 & 80.2 & 80.3 & 80.2 \\
       Ours($\delta=0.05$) & 82.4 & 82.6 & 82.2 & 82.8 & 81.9 & 83.3 & 81.1 & 83.5 & 80.4 & 83.4 & 80.4 & 83.4 \\
       Ours($\delta$=0.1) & 82.5 & 82.8 & 82.4 & 83.5 & 82.2 & 84.0 & 81.3 & 84.4 & 80.5 & 84.7 & 80.5 & 84.7 \\
       Ours($\delta$=0.2) & 82.7 & 83.4 & 82.5 & 83.9 & 82.1 & 84.2 & 81.5 & 84.7 & 79.9 & 85.0 & 79.9 & 85.0 \\
       Ours($\delta$=0.4) & 82.3 & 82.0 & 82.2 & 82.6 & 81.6 & 82.7 & 81.1 & 83.1 & 81.0 & 83.2 & 81.0 & 83.2 \\
       Ours($\delta$=0.8) & 81.9 & 82.2 & 81.7 & 82.6 & 81.5 & 83.0 & 80.4 & 83.6 & 79.4 & 83.5 & 79.4 & 83.5 \\
       
      \bottomrule[1.0pt]
    \end{tabular}
    } 
    \label{tab:AblationCIFAR10Delta}
\end{table*}


\section{Proof of Eq.~\ref{eq:softce}}\label{app:ua}
Now we show the softmax with the constraints:
\begin{equation}
        U(\mathbf{a}) = \{\mathbf{P}\in \mathbb{R}_+^{m\times n }|\mathbf{P}\mathrm{1}_n=\mathbf{a}\}
\end{equation}
where $\mathbf{a}=\mathbf{1}/m$ and $\mathbf{1}_m$ is the $m$-dimensional  column vector whose elements are all ones. With the objective of the entropic OT:
\begin{equation}
    \mathbf{P}^\theta = \arg\min_{P\in U(\mathbf{a})} <\mathbf{C}^\theta,\mathbf{P}> -\epsilon H(\mathbf{P}),
\end{equation}
We introduce the dual variable $\mathbf{f}\in R^m$. The Lagrangian of the above equation is:
\begin{equation}
    L(\mathbf{P},\mathbf{f})=<\mathbf{C},\mathbf{P}> -\epsilon H(\mathbf{P}) -\sum_{i=1}^n\mathbf{f}_i \cdot\left(\sum_{j=1}^n \mathbf{P}_{ij}-\frac{1}{m}\right)
\end{equation}
The first order conditions then yield by:
\begin{equation}
    \frac{\partial L(\mathbf{P},\mathbf{f})}{\partial \mathbf{P}_{ij}}=\mathbf{C}_{ij}+\epsilon\log{\mathbf{P}_{ij}}-\mathbf{f}_i = 0.
\end{equation}
Thus we have $\mathbf{P_{ij}}=e^{(\mathbf{f}_i-C^\theta_{ij})/\epsilon}$ for every $i$ and $j$, for optimal $\mathbf{P}$ coupling to the regularized problem. 
Due to $\sum_j \mathbf{P}_{ij}=1/m$ for every $i$, we can calculate the Lagrangian parameter $\mathbf{f}_i$ and the solution of the coupling is given by:
\begin{equation}\label{eq:pij0}
        \mathbf{P}_{ij}=\frac{\exp{(-\mathbf{C}^\theta_{ij}/\epsilon})}{m\sum_{k=1}^m\mathbf{Q}_{ik}\exp{(-\mathbf{C}^\theta_{ik}/\epsilon)}}.
\end{equation}
 Then in outer minimization, we have
 \begin{equation}
    \min_{\theta} {L} = -\sum_{ij}\mathbf{\tilde{P}}_{ij} \log \left(\frac{\exp(-\mathbf{C}^\theta_{ij}/\epsilon)}{\sum_{k=1}^m\exp(-\mathbf{C}^\theta_{ik}/\epsilon))}\right)+constant,
\end{equation}

\end{document}